\newcommand{\PBS}[1]{\let\temp=\\#1\let\\=\temp}
\newcommand{\RBS}{\let\\=\tabularnewline}
\newcommand{\F}{{{\mathcal F}}}
\DeclareMathSymbol{@}{\mathord}{letters}{"3B}
\ifcvprfinal\pagestyle{empty}\fi
\newcommand{\x}{\mathbf{x}}
\newcommand{\ab}{\mathbf{a}}
\newcommand{\C}{\mathbf{C}}
\newcommand{\QH}{\mathbb{H}}
\newcommand{\one}{\mathbf{1}}
\newcommand{\X}{\mathbf{X}}
\newcommand{\q}{\mathbf{q}}
\newcommand{\qc}{\mathbf{\bar{q}}}
\newcommand{\VM}{{\cal V}} 
\newcommand{\Edges}{{\cal E}} 
\newcommand{\mur}{\bm{\mu}}
\newcommand{\muc}{\bm{\nu}}
\newcommand{\meas}{\bm{\mu}}
\newcommand{\vb}{\mathbf{v}}
\newcommand{\y}{\mathbf{y}}
\newcommand{\R}{\mathbb{R}}
\newcommand{\Rot}{\mathbf{R}}
\newcommand{\Pb}{\mathbf{P}}
\newcommand{\bp}{\mathbf{p}}
\newcommand{\Exp}{\text{\emph{Exp}}}
\newcommand{\Log}{\text{\emph{Log}}}
\newcommand{\T}{\mathcal{T}}
\newcommand{\etab}{\bm{\eta}}
\newcommand*\diff{\mathop{}\!\mathrm{d}}
\DeclarePairedDelimiterX{\infdivx}[2]{(}{)}{%
  #1\;\delimsize\|\;#2%
}
\newtheorem{thm}{Theorem}
\newtheorem{lemma}{Lemma}
\newtheorem{prop}{Proposition}
\newtheorem{dfn}{Definition}
\DeclareMathOperator*{\st}{s.t.}
\DeclareMathOperator{\sign}{sign}
\newtheorem{assumption}{\textbf{H}\hspace{-3pt}}
\Crefname{assumption}{\textbf{H}\hspace{-3pt}}{\textbf{H}\hspace{-3pt}}
\crefname{algorithm}{\text{Alg.}}{\text{Alg.}}
\crefname{assumption}{\textbf{H}}{\textbf{H}}
\crefname{equation}{\text{Eq}}{\text{Eq}}
\crefname{definition}{\text{Dfn.}}{\text{Dfn.}}
\crefname{lemma}{\text{Lemma}}{\text{Lemma}}
\crefname{dfn}{\text{Dfn.}}{\text{Dfn.}}
\crefname{thm}{\text{Thm.}}{\text{Thm.}}
\crefname{tab}{\text{Tab.}}{\text{Tab.}}
\crefname{fig}{\text{Fig.}}{\text{Fig.}}
\crefname{table}{\text{Tab.}}{\text{Tab.}}
\crefname{figure}{\text{Fig.}}{\text{Fig.}}
\crefname{section}{\text{Sec.}}{\text{Sec.}}
\newcommand{\insertimageC}[5]{ 
\begin{figure}[#5]
\centering
\includegraphics[width=#1\linewidth, clip=true]{figures/#2}
\caption{#3}
\label{#4}
\end{figure}
}
\newcommand{\insertimageStar}[5]{ 
\begin{figure*}[#5]
\centering
\includegraphics[width=#1\linewidth, clip=true]{figures/#2}
\caption{#3}
\label{#4}
\end{figure*}
}
\begin{document}

\title{\vspace{-5mm}Synchronizing Probability Measures on Rotations via Optimal Transport}

\author{
  Tolga Birdal\textsuperscript{ 1}
  \qquad
  Michael Arbel\textsuperscript{ 2}
  \qquad Umut \c{S}im\c{s}ekli\textsuperscript{ 3, 4}
  \qquad Leonidas Guibas \textsuperscript{1}
  \\
  \textsuperscript{1 }{Department of Computer Science, Stanford University, USA}\\
  \textsuperscript{2 }{Gatsby Computational Neuroscience Unit, University College London, UK}\\
  \textsuperscript{3 }{LTCI, T\'{e}l\'{e}com Paris, Institut Polytechnique de Paris, Paris, France}\\
  \textsuperscript{4 }{Department of Statistics, University of Oxford, Oxford, UK}
}

\maketitle

\begin{abstract}
We introduce a new paradigm, `measure synchronization', for synchronizing graphs with measure-valued edges. We formulate this problem as maximization of the cycle-consistency in the space of probability measures over relative rotations. In particular, we aim at estimating marginal distributions of absolute orientations by synchronizing the `conditional' ones, which are defined on the Riemannian manifold of quaternions.
Such graph optimization on distributions-on-manifolds enables a natural treatment of multimodal hypotheses, ambiguities and uncertainties arising in many computer vision applications such as SLAM, SfM, and object pose estimation. We first formally define the problem as a generalization of the classical rotation graph synchronization, where in our case the vertices denote probability measures over rotations. We then measure the quality of the synchronization by using Sinkhorn divergences, which reduces to other popular metrics such as Wasserstein distance or the maximum mean discrepancy as limit cases. We propose a nonparametric Riemannian particle optimization approach to solve the problem. Even though the problem is non-convex, by drawing a connection to the recently proposed sparse optimization methods, we show that the proposed algorithm converges to the global optimum in a special case of the problem under certain conditions. Our qualitative and quantitative experiments show the validity of our approach and we bring in new perspectives to the study of synchronization.
\end{abstract}

\vspace{-3mm}\section{Introduction}
\label{sec:intro}
\textit{Synchronization}~\cite{simons1990overview,giridhar2006distributed,singer2011angular}, is the art of consistently recovering absolute quantities from a collection of ratios or pairwise relationships. An equivalent definition describes the problem as simultaneously upgrading from pairwise local information onto the global one: an \textit{averaging}~\cite{govindu2014averaging,hartley2013rotation,Arrigoni2019}. At this point, synchronization is a fundamental piece of most multi-view reconstruction / multi-shape analysis pipelines~\cite{salas2013slam++,cadena2016past,carlone2015initialization} as it heavy lifts the global constraint satisfaction while respecting the geometry of the parameters. In a wide variety of applications, these parameters are composed of the elements of a rotation expressed either as a point on a Riemannian manifold (forms a Lie group) or its tangent space (Lie algebra)~\cite{govindu2004lie,birdal2018bayesian}. 
\insertimageC{1}{measuresync2_cropped.pdf}{We coin \textbf{measure synchronization} to refer to the problem of recovering distributions over absolute rotations $\{\meas_i\}_i$ (shown in red) from a given set of potential relative rotations $\{\meas_{ij}\}_{(i,j)}$ per pair that are estimated in an independent fashion.\vspace{-3mm}
}{fig:measuresync}{t!}

Unfortunately, for our highly complex environments, symmetries are inevitable and they may render it impossible to hypothesize an ideal single candidate to explain the configuration of capture, be it pairwise or absolute. In those cases, there is no single best solution to the 3D perception problem. One might then speak of a set of plausible solutions that are only collectively useful in characterizing global positioning of the cameras / objects~\cite{kendall2015posenet,manhardt2019explaining,birdal2018bayesian}. Put more concretely, imagine a 3D sensing device capturing a rotating untextured, cylindrical coffee mug with a handle (\cf~\cref{fig:measuresync}). The mug's image is, by construction, identical under the rotations around its symmetry axis whenever the handle is occluded. Any relative or absolute measurement is then subject to an \textit{ambiguity} which prohibits the estimation of a unique best alignment. Whenever the handle appears, mug's pose can be determined uniquely, leading to a single mode. Such phenomena are ubiquitous in real scenarios and have recently lead to a paradigm shift in computer vision towards reporting the estimates in terms of either empirical or continuous multimodal probability distributions on the parameter spaces~\cite{kendall2016modelling,manhardt2019explaining,birdal2018bayesian,birdal2019birkhoff}, rather than single entities as done in the conventional methods~\cite{nister2004efficient,melekhov2017relative,hartley2003multiple}.

In the aforementioned situations, to the best of our knowledge, all of the existing synchronization algorithms~\cite{govindu2004lie,rosen2016se,Eriksson2018,govindu2014averaging,sun2019,arrigoni2016spectral} are doomed to failure and a new notion of synchronization is required that rather takes into account the distributions that are defined on the parameter space. In other words, what are to be synchronized are now the probability measures and not single data points\footnote{We note that our framework extends the classical one in the sense that single data points can be seen as degenerate probability distributions represented by Dirac-delta functions.}. 

In this paper, considering the {Sinkhorn divergence}, which has proven successful in many application domains \cite{cuturi2013sinkhorn,feydy2019,genevay2018sample}, we introduce the concept of \textit{measure synchronization}. After formally defining the graph synchronization problem over probability measures by using the Sinkhorn divergence, we propose a novel algorithm to solve this problem, namely the Riemannian particle gradient descent (RPGD), which is based on the recently developed nonparametric `particle-based' methods \cite{liu2017stein,arbel2019maximum,liutkus2019sliced,kolouri2019generalized} and Riemannian optimization methods~\cite{absil2009optimization}. 
In particular, we parameterize rotations by quaternions and use the Sinkhorn divergence~\cite{feydy2019} between the estimated joint rotation distribution and the input empirical relative rotation distribution as a measure of consistency. 

We further investigate the special case of the proposed problem, where the loss function becomes the \emph{maximum mean discrepancy} (MMD) combined with a simple regularizer. By using this special structure and imposing some structure on the weights of the measures to be estimated, we draw a link between our approach and the recently proposed sparse optimization of~\cite{chizat2019sparse}. This link allows us to establish a theoretical guarantee showing that RPGD converges to the global optimum under certain conditions. 

Our synthetic and real experiments demonstrate that both of our novel, non-parametric algorithms are able to faithfully discover the probability distribution for each of the absolute cameras with varying modes, given the multimodal rotation distribution for each edge in the graph. This is the first time the two realms of synchronization and optimal transport are united and we hope to foster further research leading to enhanced formulations and algorithms. To this end, we will make our code as well as the datasets publicly available.
In a nutshell, our contributions are:
\begin{enumerate}[noitemsep]
    \item We introduce the problem of measure synchronization that aims for synchronization of probability distributions defined on Riemannian manifolds. 
    \item We address the particular case of multiple rotation averaging and solve this new synchronization problem via minimizing a collection of regularized optimal transport costs over the camera pose graph.
    \item We propose the first two solutions resulting in two algorithms: \textit{Sinkhorn-Sync} and \textit{MMD-Sync}. These algorithms differ both by the topology they induce and by their descent regimes.
    \item We show several use-cases of our approach under synthetic and real settings, demonstrating the applicability to a general set of challenges.
\end{enumerate}
We release our source code under: \url{https://synchinvision.github.io/probsync/}.

\section{Prior Art}
\label{sec:related}

\paragraph{Synchronization.}
The emergence of the term \textit{synchronization} dates back to Einstein \& Poincar\'e and the principles of relativity~\cite{galison2003einstein}. It was first used to recover clocks~\cite{simons1990overview}. Since then the ideas have been extended to many domains other than space and time, leaping to communication systems~\cite{golomb1963synchronization,giridhar2006distributed}, finally arriving at the graphs in computer vision with Singer's angular synchronization~\cite{singer2011angular}. Various aspects of the problem have been vastly studied: different group structures~\cite{govindu2004lie,birdal2019birkhoff,arrigoni2017synchronization,Arrigoni2019,huang2019tensor}, closed form solutions~\cite{arrigoni2016spectral,arrigoni2017synchronization,Arrigoni2019}, robustness~\cite{chatterjee2017robust}, certifiability~\cite{rosen2019se}, global optimality~\cite{briales2017cartan}, learning-to-synchronize~\cite{huang2019learning} and uncertainty quantification (UQ)~\cite{tron2014statistical,birdal2018bayesian}.  
The latter recent work of Birdal~\etal~\cite{birdal2018bayesian} and the $K$-best synchronization~\cite{sun2019} are probably what relates to us the most. In the former a probabilistic framework to estimate an empirical absolute distribution and thereby the uncertainty per camera is proposed. The latter uses a deterministic approach to yield $K$ hypotheses with the hope that ambiguities are captured. Yet, none of the works in the synchronization field including~\cite{birdal2018bayesian,birdal2019birkhoff,sun2019} are able to handle multiple relative hypotheses and thus are prone to failure when ambiguities are present in the input. This is what we overcome in this work by proposing to align the observed and computed relative probability measures: \emph{measure synchronization}.
\vspace{-3mm}
\paragraph{Optimal transport and MMD.}
Comparing probability distributions has even a longer history and is a very well studied subject~\cite{mahalanobis1936generalized,kullback1951information}. 
Taking into account the geometry of the underlying space, optimal transport (OT)~\cite{villani2008optimal}, proposed by Monge in 1781~\cite{monge1781memoire} and advanced by Kantorovich~\cite{kantorovich2006translocation} seeks to find the minimum amount of effort needed to morph one probability measure into another one. This geometric view of comparing empirical probability distributions metrizes the convergence in law better capturing the topology of measures~\cite{feydy2019}. 
Thanks to the efficient algorithms developed recently~\cite{cuturi2013sinkhorn,rabin2011wasserstein,solomon2015convolutional}, OT has begun to find use-cases in various sub-fields related to computer vision such as deep generative modeling~\cite{arjovsky2017wasserstein}, non-parametric modeling~\cite{liutkus2019sliced}, domain adaptation~\cite{courty2016optimal}, 3D shape understanding~\cite{solomon2014earth}, graph matching~\cite{xu19b}, topological analysis~\cite{lacombe2018large}, clustering~\cite{mi2018variational}, style transfer~\cite{kolkin2019style} or image retrieval~\cite{rubner2000earth} to name a few.

Only recently Feydy~\etal~\cite{feydy2019} formulated a new geometric Sinkhorn~\cite{sinkhorn1967concerning,sinkhorn1964relationship} divergence that can relate OT distances, in particular the Sinkhorn distance~\cite{cuturi2013sinkhorn} to \emph{maximum mean discrepancy} (MMD)~\cite{gretton2007kernel}, a cheaper to compute frequentist norm with a smaller sample complexity. MMD has been successfully used in domain adaptation~\cite{tzeng2014deep}, in GANs~\cite{li2017mmd, arbel2018gradient}, as a critic~\cite{sutherland2016generative} as well as to build auto-encoders~\cite{tolstikhin2018wasserstein}. Furthermore, Arbel~\etal~\cite{arbel2019maximum} have established the Wasserstein gradient flow of the MMD and demonstrated global convergence properties based on a novel noise injection scheme. Jointly, these advancements allow us to tackle the proposed measure synchronization problem using MMDs with global optimality guarantees.

To the best of our knowledge, no prior work addresses the measure synchronization problem we consider here. The closest work is by Solomon~\etal~\cite{solomon2014wasserstein} where a Wasserstein metric is minimized so as to perform a graph transduction: the labels are propagated from the \emph{absolute} entities along the edges (relative), not the other way around.
\section{Background}
\label{sec:OT}
We now define and review the necessary concepts required for the grasp of our method. 
\subsection{Rotations and Synchronization}
\begin{dfn}[Rotation Graph]
We consider a finite simple directed graph $G = (\VM, \Edges)$, where vertices correspond to reference frames and edges to the available relative measurements. Both vertices and edges are labeled with absolute and relative orientations, respectively. Each absolute frame is described by a rotation matrix $\{\Rot_i\in SO(3)\}_{i=1}^n$. Each relative pose is expressed as the transformation between frames $i$ and $j$, $\Rot_{ij}$, where $(i,j) \in \Edges \subset [n] \times [n]$.
$G$ is undirected such that if $(i, j) \in \Edges$, then $(j, i) \in \Edges$ and $\Rot_{ij}=\Rot_{ji}^{-1}$. 
\end{dfn}

\begin{dfn}[Rotation Synchronization]
Synchronizing pose graphs, also known as multiple rotation averaging, involves the task of relating different coordinate frames by satisfying the compatibility constraint~\cite{govindu2001combining,hartley2013rotation}:
\begin{equation}
\label{eq:problem}
\Rot_{ij} \approx \Rot_{j}\Rot_{i}^{-1},\, \forall i \neq j
\end{equation}
\end{dfn}
\begin{dfn}[Quaternion]
In the rest of the paper, we will represent a 3D rotation by a \emph{quaternion} $\q$, an element of Hamilton algebra $\QH$, extending the complex numbers with three imaginary units $\textbf{i}$, $\textbf{j}$, $\textbf{k}$ in the form
$\q
		= q_1 \textbf{1} + q_2 \textbf{i} + q_3 \textbf{j} + q_4 \textbf{k}
    = \left(q_1, q_2, q_3, q_4\right)^{\text{T}}$,
with $\left(q_1, q_2, q_3, q_4\right)^{\text{T}} \in \mathbb{R}^4$ and
$\textbf{i}^2 = \textbf{j}^2 = \textbf{k}^2 = \textbf{i}\textbf{j}\textbf{k} = - \textbf{1}$. We also write $\q := \left[a, \textbf{v}\right]$ with the scalar part $a = q_1 \in \mathbb{R}$ and the vector part $\textbf{v} = \left(q_2, q_3, q_ 4\right)^{\text{T}} \in \mathbb{R}^3$.
The conjugate $\bar{\q}$ of the quaternion $\q$ is given by $
	\qc := q_1 - q_2 \textbf{i} - q_3 \textbf{j} - q_4 \textbf{k}$.
A versor or \emph{unit quaternion} $\q \in \mathbb{H}_1$ with $1 \stackrel{\text{!}}{=} \left\|\q\right\|
	:= \q \cdot \qc$ and $\q^{-1}= \qc$,
gives a compact and numerically stable parametrization to represent orientation of objects in $\mathbb{S}^3$, avoiding gimbal lock and singularities~\cite{busam2016_iccvw}.  Identifying antipodal points $\q$ and $-\q$ with the same element, the unit quaternions form a double covering group of $SO\left(3\right)$.
The non-commutative multiplication of two quaternions $\mathbf{p}:=[p_1, \mathbf{v}_p]$ and $\mathbf{r}:=[r_1, \mathbf{v}_r]$ is defined to be $\textbf{p}\otimes\textbf{r} =
[{p}_1{r}_1-\mathbf{v}_p\cdot \mathbf{v}_r,\,{p}_1\mathbf{v}_r+{r}_1 \mathbf{v}_p+\mathbf{v}_p \times \mathbf{v}_r]$.
For simplicity we use $\textbf{p}\otimes\textbf{r}:=\textbf{p}\cdot\textbf{r}:=\textbf{p}\textbf{r}$.
\end{dfn}
\begin{dfn}[Relative Quaternion]
The relative rotation between two frames can also be conveniently written in terms of quaternions: $\q_{ij} = \q_i \otimes \bar{\q}_j$.
Note that for quaternions we will later omit the symbol $\otimes$. 
\end{dfn}

\paragraph{Riemannian geometry of quaternions}
We will now briefly explain the Lie group structure of the quaternions $\QH$ that are essential for deriving the update equations on the manifold. Our convention follows~\cite{angulo2014riemannian}.
\begin{dfn}[Exponential Map]
The \emph{exponential map} $\Exp_{\q}(\cdot)$ maps any vector in $\T_{\q}\QH$, the tangent space of $\QH$, onto $\QH$: 
\begin{align}
    \Exp_{\q}(\etab)=\q\exp(\etab) = \q\Bigg(  e^w \big( \cos(\theta), \vb \frac{\sin(\theta)}{\theta}\big)\Bigg),
\end{align}
where $\etab=(w, \vb)\in \T_{\q}\QH$ and $\theta = \| \vb \|$.
\end{dfn}
\begin{dfn}[Logarithmic Map]
The inverse of $\emph{exp-map}$, $\Log_{\q}(\bp) : \QH \mapsto \T_{\q}\QH$ is called the $\emph{log-map}$ and defined as:
\begin{align}
    \Log_{\q}(\bp)=\log(\q^{-1}\bp) = \Big(0, \frac{\vb}{\|\vb\|} \arccos(w) \Big),
\end{align}
this time with a slight abuse of notation $\q^{-1}\bp=(w, \vb)$. 
\end{dfn}
\begin{dfn}[Riemannian Distance]
\label{def:riemann_dist}
Let us rephrase the Riemannian distance between two unit quaternions using the logarithmic map, whose norm is the length of the shortest geodesic path. Respecting the antipodality:
\begin{align}
    d(\q_1,\q_2) = \begin{cases}
    \| \Log_{\q_1}(\q_2) \| \,\,\,\,=  \arccos(w), & w\geq 0 \\
    \| \Log_{\q_1}(-\q_2) \| =\arccos(-w), & w<0
    \end{cases}
    \nonumber
\end{align}
where $\q_1^{-1}\q_2 = (w,\vb)$.
\end{dfn}


\subsection{Optimal Transport}

\begin{dfn}[Discrete Probability Measure]
We define a discrete probability measure $\mur$ on the simplex $\Sigma_d\triangleq\{\x \in \R_+^n : \x^\top\one_n=1\}$ with weights $\ab=\{a_i\}$ and locations $\{x_i\}, i=1\dots n$ as:
\begin{align}
    \mur = \sum\nolimits_{i=1}^n a_i \delta_{x_i}, \quad a_i\geq 0 \,\,\wedge\,\, \sum\nolimits_{i=1}^n a_i = 1 
    \label{eqn:discrete_meas}
\end{align}
where $\delta_{x}$ is a Dirac delta function at $x$. 

We further define the product of two measures $\mur = \sum_{i=1}^{n_{\mur}} a^{\mur}_i \delta_{x^{\mur}_i}$ and $\muc = \sum_{i=1}^{n_{\muc}} a^{\muc}_i \delta_{x^{\muc}_i}$, as follows:
\begin{align}
\mur \otimes \muc \triangleq \sum\nolimits_{i=1}^{n_{\mur}} \sum\nolimits_{j=1}^{n_{\muc}} a^{\mur}_i a^{\muc}_j \delta_{x^{\mur}_i} \delta_{x^{\muc}_j}.
\end{align}
\end{dfn}

\begin{dfn}[Transportation Polytope]
For two probability measures $\mur$ and $\muc$ in the simplex, let $U(\mur,\muc)$ denote the polyhedral set of $n_{\mur}\times n_{\muc}$ matrices:
\begin{align*}
    U(\mur,\muc) = \{\Pb\in\R^{n_{\mur}\times n_{\muc}}_{+} \,:\, \Pb\one_{n_{\muc}} = \mur \,\wedge\, \Pb^\top \one_{n_{\mur}} = \muc \}
\end{align*}
$\one_d$ is a $d$-dimensional ones-vector.
\end{dfn}

\begin{dfn}[Kantorovich’s Optimal Transport Problem]
Let $\C$ be a $n_{\mur}\times n_{\muc}$ cost matrix that is constructed from the \emph{ground cost} function $c(x_i^{\mur}, x_j^{\muc}) $. Kantorovich's problem seeks to find the transport plan optimally mapping $\mur$ to $\muc$:
\begin{align}
    \mathcal{W}_c(\mur, \muc) = d_{c}(\mur, \muc) = \min\nolimits_{\Pb\in U(\mur,\muc)} \langle \Pb, \C \rangle 
\end{align}
with $\langle\cdot\rangle$ denoting the Frobenius dot-product. Note that this expression is also known as the \textit{Wasserstein distance} (WD).
\end{dfn}

\begin{dfn}[Maximum Mean Discrepancy]
\label{dfn:mmd}
Given a characteristic positive semi-definite kernel $k$, the maximum mean discrepancy (MMD)(\cite{gretton2007kernel}) defines a distance between probability distributions. In particular, for discrete measures $\mur$ and $\muc$ and when  $k$ is induced from a ground cost function $c$: $k(x,y) = \exp(-c(x,y))$, the squared MMD is given by:
\begin{align}
    \mathcal{MMD}_k^2(\mur,\muc) =  \bar{K}_{\mur,\mur} + \bar{K}_{\muc,\muc}  - 2\bar{K}_{\mur,\muc} 
\end{align}
where $\bar{K}_{\mur,\muc} $ is an expectation of $k$ under $\mur\otimes\muc$:
\begin{align}
	\bar{K}_{\mur,\muc} = \sum_{i=1}^{n_{\mur} }\sum_{j=1}^{n_{\mur}} a_i^{\mur} a_j^{\muc} \exp{(-c(x_i^{\mur},x_j^{\muc}))}
\end{align}
\end{dfn}
\begin{dfn}[Sinkhorn Distances]
\label{eq:sinkhorn}
Cuturi~\cite{cuturi2013sinkhorn} introduced an alternative convex set consisting of joint probabilities with small enough mutual information:
\begin{align}
    U_{\alpha}(\mur,\muc) = \{\Pb\in U(\mur,\muc) \,:\, \emph{KL}\infdivx{\Pb} {\mur\muc^\top}\leq \alpha \}
\end{align}
where $\emph{KL}(\cdot)$ refers to the Kullback Leibler divergence~\cite{kullback1951information} between $\mur$ and $\muc$. This restricted polytope leads to a tractable distance between discrete probability measures under the cost matrix $\C$ constructed from the function $c$:
\begin{align}
    d_{c,\alpha}(\mur, \muc) = \min\nolimits_{\Pb \in U_{\alpha}(\mur,\muc)} \langle \Pb, \C \rangle.
\end{align}
$d_{c,\alpha}$ can be computed by a modification of simple matrix scaling algorithms, such as Sinkhorn's algorithm~\cite{sinkhorn1967diagonal}.
\end{dfn}
The discrepancy $d_{c,\alpha}$ suffers from an entropic bias, that is corrected by \cite{feydy2019,genevay2017learning} giving rise to a new family of divergences as we define below:
\begin{dfn}[Sinkhorn Divergences]\label{eq:sinkdiv}
$\mathcal{S}_{c,\alpha}(\mur, \muc)$ is defined to be the Sinkhorn divergence:
\begin{align}
\mathcal{S}_{c,\alpha}(\mur, \muc) \triangleq 2d_{c,\alpha}(\mur, \muc)-d_{c,\alpha}(\mur, \mur)-d_{c,\alpha}(\muc, \muc).\nonumber
\end{align}
Unlike~\cref{eq:sinkhorn}, this satisfies $\mathcal{S}_{c,\alpha}(\mur, \mur)=\mathcal{S}_{c,\alpha}(\muc, \muc)=0$ and interpolates between the WD and MMD with kernel $k=-c$:
\begin{align*}
\mathcal{W}_c(\mur, \muc) \xleftarrow{\alpha \rightarrow 0} \mathcal{S}_{c,\alpha}(\mur, \muc) \xrightarrow{\alpha \rightarrow \infty} \frac{1}{2} \mathcal{MMD}^2_{-c}(\mur,\muc).
\end{align*}
\end{dfn}

\section{Measure Synchronization}
In this section, we will introduce the measure synchronization problem. In our problem setting, for each camera pair $(i,j) \in \Edges$, we will assume that we \emph{observe} a discrete probability measure $\meas_{ij}$, denoting the distribution of the relative poses between the cameras $i$ and $j$. More formally, such \emph{relative pose measures} will consist of a collection of weighted Dirac masses (cf.\ \eqref{eqn:discrete_meas}), given as follows:
\begin{align}
\meas_{ij} \triangleq \sum\nolimits_{k=1}^{K_{ij}} w^{(k)}_{ij} \delta_{\q^{(k)}_{ij}}, \label{eqn:rel_meas}
\end{align}
where $\{\q_{ij}^{(k)}\}_{k=1}^{K_{ij}}$ denote all the possible \emph{relative pose masses} and $K_{ij}$ denotes the total number of potential relative poses for a given camera pair $(i,j)$, whereas $w_{ij}^{(k)}\in \mathbb{R}_+$ denotes the weight of each point mass $\q_{ij}^{(k)}$. We will assume that these measures can be estimated (in a somewhat noisy manner) by using an external algorithm, such as~\cite{melekhov2017relative,nister2004efficient,deng20193d}. 

Given all the `noisy' relative pose measures $\{\meas_{ij}\}_{(i,j)\in \Edges}$, similar to the conventional pose synchronization problem, our goal becomes estimating the \emph{absolute pose measures} $\meas_i$, which are a-priori unknown and denote the probability distribution of the absolute pose corresponding to camera $i$. Similar to the relative pose measures, we define the absolute pose measures as follows:
\begin{align}
\meas_{i} \triangleq \sum\nolimits_{k=1}^{K_{i}} w^{(k)}_{i} \delta_{\q^{(k)}_{i}},
\end{align}
where $\{\q^{(k)}_{i},  w^{(k)}_{i}\}_{k=1}^{K_i}$ are the point mass-weight pairs in $\meas_i$ and $K_i$ denotes the number of points in $\meas_i$. Semantically,  $\q^{(k)}_{i}$ and $w^{(k)}_{i}$ will respectively denote all the `plausible' poses and their weights for camera $i$. Our ultimate goal is to estimate the absolute pose measures such that they will produce `compatible' relative measures in the sense of~\cref{eq:problem}. In other words, we aim at estimating the pairs $\{\{\q^{(k)}_{i},  w^{(k)}_{i}\}_{k=1}^{K_i}\}_{i=1}^n$, that achieve a good graph consistency at a probability distribution level, a notion which will be precised in the sequel. 

Let us introduce some notation that will be useful for defining the ultimate problem. We denote by $\meas$ a coupling between $(\q_1, \dots,\q_n)$
which is of the form:
\begin{align}\label{eq:general_join}
	\meas = \sum_{k_1=1}^{K_1}... \sum_{k_n=1}^{K_n} v^{(k_1,...,k_n)}\prod_{i=1}^n \delta_{q_i^{k_i}}.
\end{align}
where $v^{(k_1,...,k_n)}$ are non-negative weights that sum to $1$. $\meas$ represents the joint distribution over $(\q_1,...,\q_n)$ and recovers each marginal distributions $\meas_i$ after marginalizing over all the other ones. 
Although one could, in principle, learn the general form of \cref{eq:general_join}, we  will be interested in two extreme cases for $\meas$, a \textit{high entropy} case (HE)  and a \textit{low entropy} case (LE) for which \cref{eq:general_join} simplifies. In the (HE) case, we assume that $\meas$ is fully factorized, i.e:
\begin{align}
	v^{(k_1,...,k_n)} = \prod\nolimits_{i=1}^n w_{i}^{(k_i)}.
\end{align}
This case occurs when all information about the pairings between the absolute poses measures $\meas_i$ is lost.
In the (LE) case, $v^{(k_1,...,k_n)} = 0$ whenever $k_1,...k_n$ are not all equal to each other. This implies in particular that all marginals $\meas_i$ have the same number of point masses $K_1=...=K_n=K$ and share the same weights: 
\begin{align}
	w_i^{(k)}= v^{(k,...,k)} \triangleq v^{(k)}.
\end{align}
This setting is `simpler' in the sense that it carries more information about the absolute poses. We will illustrate that both of these settings will be useful in different applications.

We propose to compute a joint distribution $\meas$ in either case that is consistent with the relative pose measures. 
As a first step towards this, we further introduce the `composition functions' $g_{ij}$, which are simply given by:
\begin{align}
	g_{ij}(\q_1,...,\q_n) \triangleq \q_i\overline{\q}_j.
\end{align}
We use $g_{ij}$ to construct `ideal relative pose measures' from the joint absolute measure $\meas$ by a push-forward operation, again denoted by $g_{ij}(\meas)$:
\begin{align}\label{eqn:comp_fn}
 g_{ij}(\meas) 
&\triangleq \sum\nolimits_{k_1=1}^{K_1}... \sum\nolimits_{k_n=1}^{K_n}  v^{(k_1,...,k_n)}\delta_{\q^{(k_i)}_i \overline{\q^{(k_j)}_j}}. 
\end{align}
In both cases (HE) and (LE), $g_{ij}$ admits a simpler expression which is provided in the supplementary document.

We are now ready to formally the define the measure synchronization problem. 
\begin{dfn}
Let $\mathcal{L}(\meas)$ be a \emph{loss} functional of the form:
\begin{align}\label{eq:loss}
	\mathcal{L}(\meas)\triangleq \sum\nolimits_{(i,j)\in \Edges} \mathcal{D}(g_{ij}(\meas), \boldsymbol{\mu}_{ij})
\end{align}
where $\mathcal{D}$ denotes a divergence that measures the discrepancy between  two probability distributions and let $\mathcal{R}$ be a \emph{regularization} functional imposing structure on $\meas$.
The measure synchronization problem is formally defined as:
\begin{align}
\nonumber &\min_{\meas} \>\>\> \mathcal{L}(\meas) +\mathcal{R}(\meas) \\
&\st \quad [w_i^{1}, \dots, w_i^{K_i}]^\top \in \mathcal{C}_i, \quad \forall i \in \{1,\dots,n\} \label{eqn:meassync}
\end{align}
where $\mathcal{C}_i \subseteq \mathbb{R}_+^{K_i}$ denotes a constraint set where the weights of the absolute measure $\meas_i$ are restricted to live in. 
\end{dfn}
Note that in both (HE) and (LE), $\meas$ can be completely recovered from the marginals $\meas_i$ and that an additional constraint on the weights is needed in (LE): $w_i^{k}= w_{j}^{k}$ for all $1\leq i,j\leq n$. 

Our problem formulation resembles the recent optimal transport-based implicit generative modeling problems~\cite{arjovsky2017wasserstein,kolouri2018sliced,tolstikhin2017wasserstein}. The main differences in our setting are as follows: (i) the choice of the composition functions $g_{ij}$, which is specific to the pose synchronization problem, (ii) the problem is purely nonparametric in the sense that the absolute measures $\{\meas_i\}_i$ are not defined through any parametric mapping. 
We will be interested in two choices for the loss  $\mathcal{D}$:

\vspace{-7pt}

\paragraph{Case 1 - \textit{Sinkhorn-Sync}.} Due to its recent success in various applications in machine learning~\cite{arjovsky2017wasserstein} and its favorable theoretical properties~\cite{genevay2018sample}, in our first configuration, we choose the loss functional $\mathcal{D}$ to be the squared Sinkhorn divergence $\mathcal{S}_{c,\alpha}^2$ with finite $\alpha$, as defined in~\cref{eq:sinkhorn}. In this setting, we impose the absolute measures $\meas_i$ to be probability measures, i.e.\ we set each $\mathcal{C}_i$ to be the probability simplex $\Sigma_{K_i}$, hence imposing the sum $\sum_{k=1}^{K_1} w_i^{(k)}$ to be equal to $1$. Thanks to these constraints, in this setup we do not need to use regularization, i.e.\ we set $\mathcal{R}(\mu) = 0$.  

\insertimageStar{1}{evolution2_cropped.pdf}{Density estimates for a single camera visualized over time. We sum up the probability density functions of Bingham distributions centered on each quaternion and marginalize over the angles to reduce the dimension to 3. This way we are able to plot the density estimate for all particles of a given camera and track it across iterations. Note that in here we use MMD distance with an unconstrained RPGD.\vspace{-3mm}}{fig:evolution}{t!}

\vspace{-7pt}
\noindent \paragraph{Case 2 - \textit{MMD-Sync}.} In our second configuration, we consider a setting which will enable us to analyze the proposed Riemannian particle descent algorithm, which will be defined in the next section. In particular, we consider the loss functional is given by the  MMD distance. In this setup, we constrain the absolute measures $\meas_i$ to be positive measures by choosing the constraint sets to be $\mathcal{C}_i = \mathbb{R}_+^{K_i}$. Accordingly, we use regularization on the weights of $\meas_i$, that is given as follows:
\begin{align}
\mathcal{R}(\meas) = \lambda \sum\nolimits_{i=1}^n\sum\nolimits_{k_i=1}^{K_i} w_i^{(k_i)}
\label{eqn:reg_def}
\end{align}
where $\lambda> 0$ is a hyper-parameter. 

By relaxing the constraint on the weights, this problem can be written as an unconstrained optimization problem, which favors `sparse' solutions, in the sense that we enforce the weights to have small values, which in turn forces the solution to have only a few point masses with large weights. 
We will show that in the (LE) setting and by using the structure of MMD, this setup is closely related to the recent sparse optimization problem in the space of measures~\cite{chizat2019sparse} and hence inherits its theoretical properties.

\vspace{-7pt}

\paragraph{Choice of the ground cost.} As for the ground cost function $c$ required for the Sinkhorn divergence and MMD, we investigate two options. The first option is simply choosing the squared Euclidean distance between the point masses, $c(\hat{\q}_{ij}^{(k)} ,\q_{ij}^{(l)}) = \|\hat{\q}_{ij}^{(k)} - \q_{ij}^{(l)} \|^2_2$, where $\hat{\q}_{ij}^{(k)}$ and $\q_{ij}^{(k)}$ are given in \eqref{eqn:comp_fn} and \eqref{eqn:rel_meas}, respectively. This yields a positive definite MMD kernel. On the other hand, for the quaternions which are naturally endowed with a Riemannian metric, the second option is the geodesic distances as defined in \cref{def:riemann_dist}, i.e.\  $c(\hat{\q}_{ij}^{(k)} ,\q_{ij}^{(l)}) = d(\hat{\q}_{ij}^{(k)}, \q_{ij}^{(l)})^p$, $1\leq p\leq 2$.
This second choice leads to a positive definite MMD kernel $k$ (\cf~\cref{dfn:mmd}) only when $p=1$. Positivity of the kernel is a theoretical requirement for both the Sinkhorn divergence and the MMD to be a distance metric, 
however, it turns out that the computation of the gradients when $p=1$ becomes numerically unstable. Hence, as a remedy, we follow the robust optimization literature \cite{aftab2014generalized} and consider $p > 1$.
This approach finds a good balance in terms of numerical stability: for $p$ slightly larger than $1$, we lose from the positive definiteness of the kernel $k$; however, the numerical computation of the gradients becomes numerically stable. We have observed in practice that small values of $p$ \ie $1+\epsilon< p \leq 1.4$ would not harm our algorithm.

\paragraph{Remarks.}
We note that in the (HE) case, the cost functional requires a quadratic pairing of the elements in the absolute measures to estimate the relative distribution. That is, any quaternions $\q_i \sim \meas_i$ and $\q_j \sim \meas_j$ are assumed to be potentially correct candidates for the orientation of cameras $i$ and $j$, respectively and any relative rotation computed from those would be a valid candidate for explaining $\meas_{ij}$.

\section{Riemannian Particle Gradient Descent}
Due to the non-convex nature of our problem, solving \eqref{eqn:meassync} is a highly challenging task. In this section, we will present our method built upon the recently proposed particle gradient descent (PGD) algorithms which are designed to solve optimization problems in the space of measures \cite{chizat2019sparse,liutkus2019sliced,arbel2019maximum,liu2017stein}. Such PGD algorithms are often derived by first considering a continuous-time gradient flow in the Wasserstein spaces (i.e., the metric space associated with the Wassertein distance), given as follows:
\begin{align}
\mathrm{d} \meas(t) = -\nabla_{\mathcal{W}} \Bigr( \mathcal{L}(\meas(t)) + \mathcal{R}(\meas(t)) \Bigr)\mathrm{d}t, \label{eqn:grad_flow}
\end{align}
where $t$ denotes the (artificially introduced) `time' and $\nabla_{\mathcal{W}}$ denotes a notion of gradient in the Wasserstein space \cite{santambrogio2017euclidean}. Under appropriate conditions on $\mathcal{L}$ and $\mathcal{R}$, one can show that the trajectories of the flow $(\meas(t))_{t \geq 0}$ converge to stable points of the optimization problem given in \eqref{eqn:meassync}~\cite{santambrogio2017euclidean,ambrosio2008gradient}. 

Since the trajectories of~\eqref{eqn:grad_flow} are continuous in time, in general, they cannot be directly simulated on a computer. Hence, similar to the conventional Euclidean gradient descent algorithm that can be viewed as a discretization of a gradient flow in the Euclidean space, we can discretize the gradient flow in \eqref{eqn:grad_flow} to be able to simulate the trajectories in discrete-time. Such discretizations give rise to the PGD algorithms, which are in general applicable to measures that are defined in Euclidean spaces. Unfortunately, standard PGD algorithms often cannot be directly applied to our problem since our measures are defined on a Riemannian manifold. Therefore, similar to \cite{chizat2019sparse}, we consider a modified PGD algorithm, which we call the Riemannian PGD (RPGD). RPGD consists of two coupled equations. The first equation updates the location of the particles $\q_i^{(k)}$ on the quaternion manifold and doesn't depend on $\mathcal{R}(\meas)$:
\begin{align}\label{eq:quaternion_update}
\q_i^{(k)} \gets \Exp_{\q_i^{(k)}} \Bigr(- \eta_q(w_i^{(k)})\nabla_{\q_i^{(k)}} \mathcal{L}(\meas) \Bigr).
\end{align}
Here, $\Exp$ denotes the exponential map defined in Section~\ref{sec:OT}  and $\eta_q(w_i^{(k)})$ is an adaptive step-size that depends on the weight of the particle $\q_{i}^{(k)}$ only. We consider either a constant step-size $\eta_q(w_i^{(k)}) = \eta_q$ for the Sinkhorn divergence or a step-size that is inversely proportional to the weight, i.e. $\eta_q(w_i^{(k)}) = \frac{\eta^0_q}{ w_i^{(k)}} $ for the MMD as proposed in \cite{chizat2019sparse}. We will discuss this choice in more details later in this section. 
We also provide the analytical form of the gradients with respect to $\q_i^{(k)}$ in the supplementary document. 

The second equation updates assumes a parametrization of the weights $w_i^{(k)}$ of the form $w_i^{(k)} = (\beta_i^{(k)})^2$ and uses a smaller step-size $\eta_{\beta}$. We recall the two cases considered: the \textit{constrained case} (Sinkhorn-Sync) and the \textit{unconstrained case} (MMD-Sync). In the latter, the weights are not constrained to sum to $1$ and the penalty term  $\mathcal{R}(\meas(t))$ ensures they don't diverge:
\begin{align}
	\beta_i^{(k)} \gets \beta_i^{(k)}- \eta_{\beta} \nabla_{\beta_i^{(k)}} \bigl(\mathcal{L}(\meas)+\mathcal{R}(\meas)\Bigr)
\end{align}
On the ohter hand, Sinkhorn-Sync enforces the weights to sum to $1$ by performing optimization on the unit hyper-sphere:
\begin{align}
\beta_i^{(k)} \gets \Exp_{\beta_i^{(k)}} \Big(-  \eta_{\beta} \nabla_{\beta_i^{(k)}} \bigl(\mathcal{L}( \meas)  \bigr) \Big).
\end{align}
We have implemented the proposed algorithm in Python using PyTorch~\cite{paszke2017automatic} for GPU support. We provide a pseudocode in the supplementary document and release the full source code at {\footnotesize\url{synchinvision.github.io/probsync/}}. 

\begin{figure}[t]
\centering
\includegraphics[width=\linewidth]{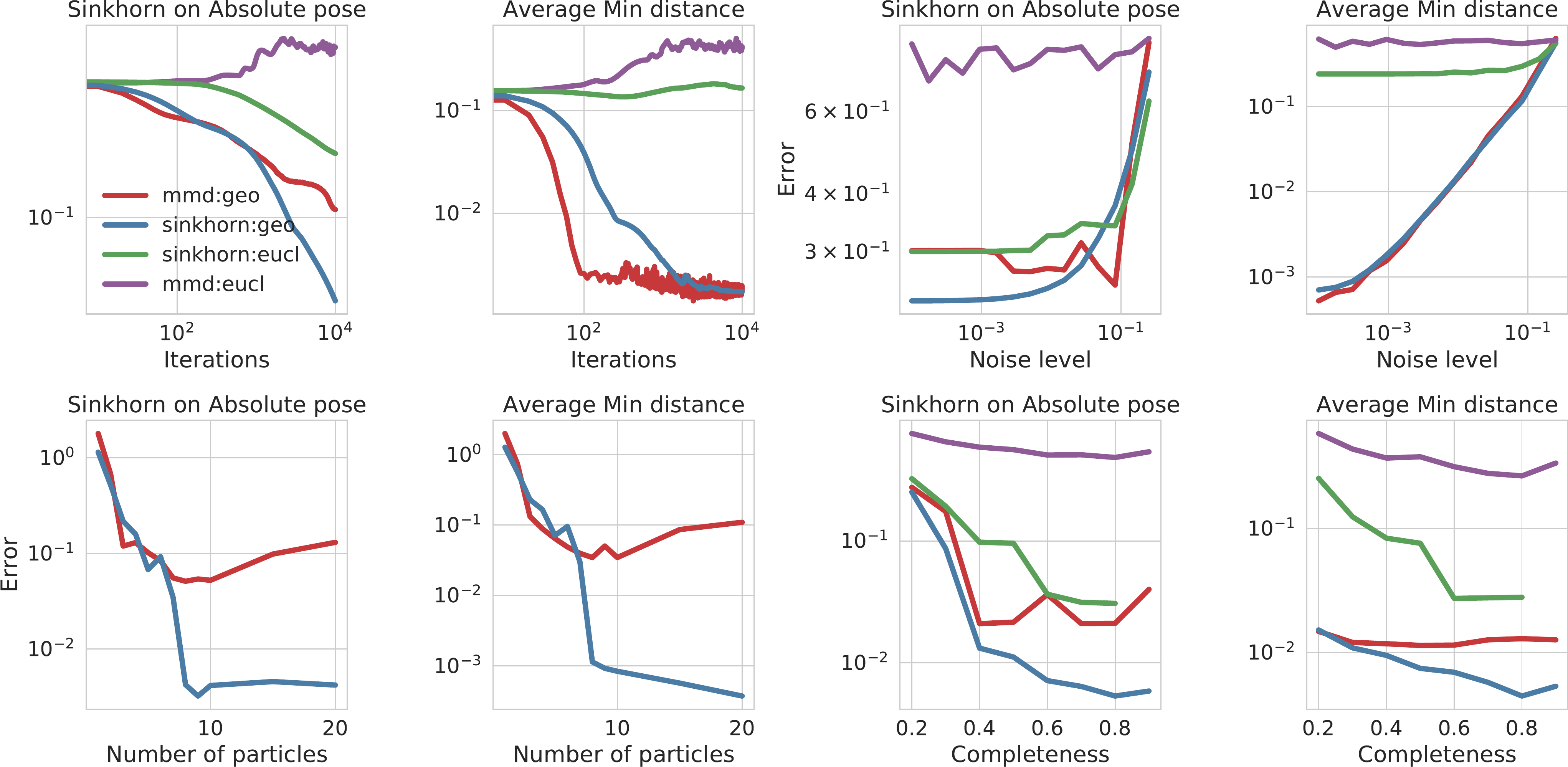}
\caption{Controlled experiments in synthetic data: We show the performance (Sinkhorn and minimum distances) of our method against varying factors of noise, graph sparsity and number of particles. In all experiments the ground truth distribution has 3 modes.}\label{fig:sensitivity}\vspace{-3mm}
\end{figure}

\paragraph{Convergence result for regularized MMD.} 
In this section, we will analyze the convergence behavior of the proposed RPGD algorithm on a special case of MMD-Sync. In particular, we restrict ourselves to the \textit{Low entropy} case and where we choose MMD as the loss function and we choose the regularizer as in \eqref{eqn:reg_def}.
This formulation enables us to analyze RPGD as a special case of the sparse optimization framework considered in \cite{chizat2019sparse}. Thanks to the convexity of the MMD loss, we will be able to show that RPGD will converge to the global optimum under the following conditions:
\begin{assumption}
The kernel k is twice Fr\'{e}chet differentiable, with locally Lipschitz second-order derivatives.
\vspace{-5pt}
\end{assumption}
\begin{assumption}
\label{asmp:op}
 Problem \cref{eqn:meassync} admits a unique global minimizer which is of the form $\meas^\star= \sum_{k=1}^{K^\star} v^\star_k \delta_{\q_k^\star}$ with $K^\star < K=K_1 = \cdots = K_n$. 
 \vspace{-5pt}
\end{assumption}
\begin{assumption}
\label{asmp:nondeg}
The minimizer $\meas^\star$ is non-degenerate in the sense of \cite{chizat2019sparse} Assumption 5.
\vspace{-5pt}
\end{assumption}
\begin{assumption}
	The weights $w_i^{(k)}$ are parametrized as: $w_i^{(k)} = w_j^{(k)} = (\beta^{(k)})^2$ for all $1\leq i,j\leq n$ and the adaptive step-size in \cref{eq:quaternion_update} is chosen to be ${\eta^0_q}/{ w_i^{(k)}}   $ for some $\eta^0_{q}>0$.
	\vspace{-5pt}
\end{assumption}
\begin{thm}
\label{thm:global}
Consider the LE setting and Case 2 defined above. Assume that \textbf{H}1-4 hold. Then, the RPGD algorithm converges to the global optimum of \eqref{eqn:meassync} with a linear rate.
\vspace{-2pt}
\end{thm}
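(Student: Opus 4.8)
The plan is to recast the problem, in the (LE) setting with the MMD loss, as an instance of the over-parametrized sparse optimization framework of \cite{chizat2019sparse}, and then to invoke their global convergence theorem after verifying its hypotheses one by one. The first step is to observe that in the (LE) case the joint law $\meas$ is completely determined by a single positive measure $\tilde{\meas} = \sum_{k=1}^{K} v^{(k)} \delta_{(\q_1^{(k)},\dots,\q_n^{(k)})}$ living on the product manifold $\QH^n$ (equivalently $(\Sp^3)^n$), since all marginals share the indices $k$ and the weights $v^{(k)}=w_i^{(k)}$. Under this identification the objective \eqref{eqn:meassync} becomes $J(\tilde{\meas}) + n\lambda\,|\tilde{\meas}|$, where $J(\tilde{\meas}) \triangleq \mathcal{L}(\meas)$ and, by \eqref{eqn:reg_def}, the regularizer reduces to a total-mass penalty $\mathcal{R}(\meas)=\lambda\sum_i\sum_k w_i^{(k)} = n\lambda\,|\tilde{\meas}|$. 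This matches exactly the sparse-spikes objective minimized in \cite{chizat2019sparse}.

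The crucial structural property to establish is convexity of $J$ as a functional on the cone of positive measures. Each composition map $g_{ij}$ acts on measures by push-forward, which is a \emph{linear} operation, so $\tilde{\meas}\mapsto g_{ij}(\meas)$ is linear. Since the squared MMD with a positive-definite kernel is a convex quadratic in its measure argument — it is the squared RKHS norm of the mean embedding, which depends affinely on the measure — each term $\mathcal{D}(g_{ij}(\meas),\meas_{ij})$ is convex, and so is their finite sum $J$; the mass penalty is linear. This is precisely the ingredient \cite{chizat2019sparse} exploits: although the particle parametrization renders the \emph{finite-dimensional} problem non-convex, the underlying functional on measures is convex, and this is what forces convergence to the global optimum rather than to a spurious stationary point.

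With convexity in hand I would then check the remaining hypotheses. Assumption~\textbf{H}1 transfers the twice-Fréchet-differentiability and locally Lipschitz second derivatives of the kernel into the smoothness and Lipschitz-gradient regularity of $J$ required there, the constants being uniform because $(\Sp^3)^n$ is compact. Assumption~\textbf{H}2 supplies a unique \emph{sparse} global minimizer $\meas^\star$ with $K^\star<K$ atoms, which is the over-parametrization condition guaranteeing enough particles to represent $\meas^\star$ with redundancy. Assumption~\textbf{H}3 is exactly their non-degeneracy condition (Assumption 5), yielding the local strong-convexity-type behavior responsible for the \emph{linear} rate near $\meas^\star$. Finally, Assumption~\textbf{H}4 fixes the $2$-homogeneous parametrization $w^{(k)}=(\beta^{(k)})^2$ together with the weight-adaptive step-size $\eta^0_q/w_i^{(k)}$, which turns the coupled position/weight updates \eqref{eq:quaternion_update} into the conic (Wasserstein--Fisher--Rao) particle gradient descent of \cite{chizat2019sparse}, for which their global-to-local argument applies; combining the global convergence (from convexity and over-parametrization) with the local linear rate (from non-degeneracy) gives the stated conclusion.

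The main obstacle I anticipate is the passage to the Riemannian setting: \cite{chizat2019sparse} is phrased for particles evolving by Euclidean gradient flow, whereas \eqref{eq:quaternion_update} moves particles by the exponential map on $\QH^n$. I would handle this by using compactness and bounded curvature of $(\Sp^3)^n$ to show that the Riemannian position updates are, to leading order, a faithful discretization of the same gradient flow, so that the curvature-induced error terms do not affect the local linear-convergence estimate. A secondary subtlety is that the convexity argument genuinely requires a positive-definite kernel, which for the geodesic ground cost holds only at $p=1$; the theorem should therefore be read as applying to the Euclidean cost (or $p=1$), with the mild loss of positive-definiteness for $p>1$ treated as a controlled perturbation. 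The part demanding the most care is verifying that Assumption~\textbf{H}3 is actually compatible with the specific synchronization loss — i.e.\ that the strict-complementarity/second-order non-degeneracy holds at $\meas^\star$ — since this depends on the interplay between the relative measures $\{\meas_{ij}\}$ and the composition maps $g_{ij}$.
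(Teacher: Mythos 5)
Your proposal follows essentially the same route as the paper: the paper's proof is literally the one-line observation that the result is a consequence of Theorem 3.9 of \cite{chizat2019sparse}, and its supplementary material fills in exactly the reduction you describe (rewriting the LE objective as a convex functional of a single over-parametrized measure via the mean-embedding form of the MMD, identifying the regularizer as a total-mass penalty, and unpacking \textbf{H}3 as Chizat's non-degeneracy condition on the local and global kernel matrices). Your additional remarks on the Riemannian discretization and the loss of positive-definiteness for $p>1$ are reasonable caveats that the paper itself acknowledges only informally, but they do not change the argument.
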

\begin{proof}
The result is a consequence of \cite{chizat2019sparse}, Thm. 3.9.
\end{proof}
We provide a more detailed definition of~\cref{asmp:nondeg} in the supplementary. This result shows that, even though our problem is non-convex, the RPGD algorithm will be able to find the global optimum thanks to over-parametrization (cf.\cref{asmp:op}). We also note that the uniqueness condition in \cref{asmp:op} does not directly hold due to the antipodal symmetry of the quaternions. We circumvent this issue by making sure that the scalar part of each quaternion is positive.

\section{Experimental Evaluation}
\label{sec:exp}
We evaluate our algorithm qualitatively and quantitatively on a series of synthetic and real test beds. 
We use the HE case in all the settings except the PnP ambiguity application. Unless stated otherwise, we fix $\alpha=0.05$ and $p= 1.2$.
We note that in all the settings, in order to eliminate the gauge ambiguity, we set the absolute measure of the first camera to its true distribution. This, for the case of a single particle, is $\q_1^{(1)}=[1,0,0,0]^\top$ without loss of generality, while for the K-best prediction scenarios, we are obliged to use the ground truth particles. We leave it as a future study to remove this requirement. 

\subsection{Evaluations on synthetic data.}
Similar to~\cite{birdal2018bayesian}, we use a controlled setup to carefully observe the behaviour of our algorithm and its variants. To do so, we begin by synthesizing a dataset of $N=10$ cameras where we could have access to ground truth absolute and relative pose distributions. We use $K_i=3$ ground truth modes and from those generate $K_{ij}=9$ relative rotation particles. Optionally, we corrupt the ground truth with gradual noise. To this end, a perturbation quaternion is applied on the relative quaternion. We set the learning rate to $\eta=0.01$. We report the Sinkhorn-error attained as well as the actual geodesic distance (\textit{Average Min distance}) between the matching particles. It is reassuring that these metrics behave similarly (\cf~\cref{fig:evolution} and \cref{fig:sensitivity}). We run our algorithm for $t_{max}=10000$ iterations until no improvement in the error is recorded. We analyze four variants of our algorithm: (1) MMD loss with Euclidean kernel (mmd:euc), (2) MMD loss with Geodesic kernel (mmd:geo), (3) Sinkhorn divergence with Euclidean kernel (sinkhorn:euc) and (4) Sinkhorn divergence with Geodesic kernel (sinkhorn:geo). 

\setlength{\intextsep}{3.5pt}%
\begin{table*}[t]
  \centering
  \caption{Results for finding the single-best solution on EPFL Benchmark~\cite{strecha2008benchmarking}. We run a specific case of our algorithm where both $k_{abs}=1$ and $k_{rel}=1$, while all the other methods are specialized for this special case. We also use random initialization whereas~\cite{govindu2004lie,torsello2011multiview,birdal2018bayesian} are initialized from a minimum spanning tree solution. The table reports the well accepted mean angular errors in degrees.}
  \resizebox{\textwidth}{!}{%
    \begin{tabular}{lccccccc}
          & Ozyesil~\etal~\cite{ozyesil2015} & R-GODEC~\cite{arrigoni2014robust} & Govindu~\cite{govindu2004lie} & Torsello~\cite{torsello2011multiview} & EIG-SE(3)~\cite{arrigoni2015spectral} & TG-MCMC~\cite{birdal2018bayesian} & {Ours} \\
    \midrule
    HerzJesus-P8 & 0.060 & 0.040 & 0.106 & 0.106 & 0.040 & 0.106 & 0.057 \\
    HerzJesus-P25 & 0.140 & 0.130 & 0.081 & 0.081 & 0.070 & 0.081 & 0.092 \\
    Fountain-P11 & 0.030 & 0.030 & 0.071 & 0.071 & 0.030 & 0.071 & 0.024 \\
    Entry-P10 & 0.560 & 0.440 & 0.101 & 0.101 & 0.040 & 0.090 & 0.190 \\
    Castle-P19 & 3.690 & 1.570 & 0.393 & 0.393 & 1.480 & 0.393 & 0.630 \\
    Castle-P30 & 1.970 & 0.780 & 0.631 & 0.629 & 0.530 & 0.622 & 0.753 \\
    \midrule
    Average & 1.075 & 0.498 & 0.230 & 0.230 & 0.365 & 0.227 & 0.291 \\
    \end{tabular}%
    }
  \label{tab:epfl}\vspace{-3mm}
\end{table*}%

\vspace{1mm}\noindent\textbf{Iterations.}
We see that despite the theoretical guarantees, in practice minimizing the Sinkhorn loss yields better minimum. This is due to the fact that sometimes it is not possible to satisfy all the assumptions presented above~\cref{thm:global}. In fact, for all the other experiments, we have observed a better behaviour of Sinkhorn divergences and will argue for this method of choice. It is also noteworthy that MMD with a Euclidean kernel is not guaranteed to converge with the learning rates tested.

\vspace{1mm}\noindent\textbf{Noise resilience.}
We see that it is important to use the true metric (geo) when it comes to dealing with noise. Both MMD and Sinkhorn losses can handle reasonable noise levels. Note that while small levels of noise ($\sim 0.01$) are tolerable, the measure synchronization problem gets more difficult with the increasing noise: we observe an exponential-like increase in the error attained after $\sigma>0.01$.

\vspace{1mm}\noindent\textbf{How do we scale with the number of particles?} 
The behavior of our algorithm when we don't know the exact number of modes in the data is of curiosity. We see that to we can recover the $K_{ij}=9$ better when we use $K_i>9$ absolute particles during estimation. This is not surprising and strongly motivated by the theory~\cite{chizat2019sparse} which can give provable guarantees only when this number is over-estimated. Nevertheless, our experiments show that as long as we overestimate, the algorithm is insensitive to the particular choice of this number, and what is affected is the runtime. 

\vspace{1mm}\noindent\textbf{Robustness against graph sparsity.} 
We observe that graphs of $50\%$ completeness (connectedness) are sufficient for our algorithm to get good results. We have diminishing returns from making the graph more connected. Nevertheless, Sinkhorn:geo seems to benefit the most from the increasing number of edges. 

\subsection{Evaluations on real data}
\noindent\textbf{Single best solution.}
Classical multiple rotation averaging (or synchronization) is a special case of the problem we pose. When our algorithm is run with a single particle both for the absolute and relative rotations, we recover the solution to the classical problem. While our aim is not to explicitly tackle this challenge, being able to handle such special case is a sanity check for us. We present our results in~\cref{tab:epfl} where we report the mean angular errors in degrees on the standard EPFL Benchmark dataset~\cite{strecha2008benchmarking}. We observe that our performance is on par with state of the art on this real dataset, even when starting from a random initialization. Moreover, unlike TG-MCMC~\cite{birdal2018bayesian} our approach is non-parametric and we do not perform any scene specific tuning. Note that we do not process the translation information which is an additional cue for the other methods.

\insertimageC{1}{markers_cropped.pdf}{Our estimation vs. ground truth particles in the markers dataset. Estimated orientations are displayed at common camera centers of RGB frames. On the top row we show two images from the dataset of~\cite{munoz2018mapping}. 3D location of the markers are shown as black squares. The small deviation is the error we make.\vspace{-4mm}}{fig:markers}{t!}

\vspace{1mm}\noindent\textbf{Resolving the PnP ambiguity. \,} Our algorithm can be used to treat many multiview vision problems with their inherent ambiguities. One such ambiguity arises in the Perspective N-point problem~\cite{lepetit2009epnp} where the single view camera pose estimation from a planar target admits two solutions. While targeted synchronization approaches do exist~\cite{ch2019resolving}, our framework can be a natural fit. We test our algorithm on the multiview dataset of MarkerMapper~\cite{munoz2018mapping}. We first estimate the pairwise poses between cameras from randomly selected markers and keep both of the possible solutions. We then use 2 particles per camera with joint coupling to recover the entirety of the plausible solution space. Further details are given in the suppl. document. We attain a minimum error of $7^\circ$, a quantity that is on par with PnP-specific synchronizers~\cite{ch2019resolving}. We present qualitative results from our estimations in~\cref{fig:markers}.

\vspace{-2mm}\section{Conclusion}\vspace{-2mm}
\label{sec:conclude}
We introduced the problem of \textit{measure syncronization}: synchronizing graphs whose edges are probability measures. We then extended it to Riemannian measures and formulated the problem as maximization of a novel cycle-consistency notion over the space of measures. 
After formally defining the problem, we developed a Riemannian particle gradient descent algorithm for estimating the marginal distributions on absolute poses. In addition to being a practical algorithm, by drawing connections to recent optimization methods, we showed that the proposed algorithm converges to the global optimum of a special case of the problem. We showed the validity of our approach via qualitative and quantitative experiments.
Our future work will address better design of composition functions $g$, exploration of sliced OT distances \cite{nadjahi2019asymptotic,nadjahi2020statistical,kolouri2020generalized}, and applications to tackling more ambiguities in vision such as essential matrix estimation.

\noindent\footnotesize\textbf{Acknowledgements}. We thank Guilia Luise, L\'ena\"{\i}c Chizat and Justin Solomon for fruitful discussions, Mai Bui for the aid in data preparation and Fabian Manhardt in visualizations. This work is partially supported by Stanford-Ford Alliance, NSF grant IIS-1763268, Vannevar Bush Faculty Fellowship, Samsung GRO program, the Stanford SAIL Toyota Research Center, and the French ANR grant ANR-16-CE23-0014.

{\small

}

\onecolumn
\setcounter{section}{0}
\renewcommand\thesection{\Alph{section}}
\newcommand{\suppsection}{\subsection}
\clearpage
\begin{center}
\textbf{\large Synchronizing Probability Measures on Rotations via Optimal Transport \\ Supplementary Material}
\end{center}
\makeatletter

This document supplements our main paper entitled \textit{Synchronizing Probability Measures on Rotations via Optimal Transport}. In specifics, we provide a background section on the technical part, the explicit form of the gradients required by the algorithm, a more detailed explanation on our assumptions and the composition functions. We also include the pseudocode of our method and two additional experiments: (1) on a real SfM dataset, (2) on the mug sequence shown in the main paper.

\section{Connection to Maximum Likelihood Estimation (MLE) and Markov Random Fields (MRF)}
Rotation synchronization has been studied in the literature under the name \emph{multiple rotation averaging}. The standard single-particle based methods such as SE-Sync~\cite{rosen2016se} assume a unimodal Gaussian/Langevin distribution. There are two caveats with that. First, the  classical  approaches cannot yield explicit uncertainty estimates, and second a unimodal distribution cannot capture ambiguities that can be multimodal. 
DISCO has tackled this problem via MRFs and loopy belief propagation~\cite{crandall2012sfm}. 
In fact our formulation is similar when the nodes are assumed to have uniform prior. Yet, like K-best syncronization~\cite{sun2019}, DISCO requires a single pairwise potential, as opposed to the multimodal distributions we have. To the best of our knowledge, such MRF methods have not been extended to work in our setting. Note that differently to all those our approach falls in the non-parametric inference.

\section{Optimal Transport on Riemannian Manifolds}
Here we denote by $\X$ a Riemannian manifold, which can be for instance the set of unit quaternions $\mathbb{H}$.
\subsection{Optimal Transport}
For two given probability distributions $\nu$ and $\mu$ in $\mathcal{P}_2(\X)$, we denote by $\Pi(\nu,\mu)$ the set of couplings between $\nu$ and $\mu$, i.e.:  $\Pi(\nu,\mu)$ contains all joint distributions $\pi$ on $\X\times \X$ such that if $(X,Y) \sim \pi $ then $X \sim \nu $ and $Y\sim \mu$. The $2$-Wasserstein distance on $\mathcal{P}_2(\X)$ is defined by means of an optimal coupling between $\nu$ and $\mu$:
\begin{align}\label{eq:wasserstein_2}
W_2^2(\nu,\mu) := \inf_{\pi\in\Pi(\nu,\mu)} \int \left\Vert x - y\right\Vert^2 \diff \pi(x,y) \qquad \forall \nu, \mu\in \mathcal{P}_2(\X)
\end{align}
It is a well established fact that such optimal coupling $\pi^*$ exists \cite{Villani:2009,Santambrogio:2015} . Moreover, $\text{W}_2$ enjoys a dynamical formulation which gives it an interpretation as the length  of the shortest path connecting $\nu$ and $\mu$ in probability space.  It is summarized by the celebrated Benamou-Brenier formula (\cite{benamou2000computational}):
\begin{align}\label{eq:benamou-brenier-formula}
W_2(\nu,\mu) = \inf_{(\rho_t,V_t)_{t\in[0,1]}}\int_0^1 \int \Vert V_t(x) \Vert^2 d\rho_t(x),
\end{align}
where the infimum is taken  over all couples  $\rho$ and $v$ satisfying  a continuity equation with boundary conditions:
\begin{align}\label{eq:continuity_equation}
\partial_t \rho_t + \mathrm{div}(\rho_t V_t ) = 0, \qquad 
\rho_0 = \nu, \quad \rho_1 = \mu.
\end{align}
The above equation expresses two facts, the first one is that $-\mathrm{div}(\rho_t V_t)$ reflects the infinitesimal changes in $\rho_t$ as dictated by the vector field (also referred to as velocity field) $V_t$, the second one is that the total mass of $\rho_t$ does not vary in time as a consequence of the divergence theorem. Equation \cref{eq:continuity_equation} is well defined in the distribution sense even when $\rho_t$ does not have a density and $V_t$ can be interpreted as a tangent vector to the curve $(\rho_t)_{t\in[0,1]}$.

In \cref{sec:particle_descent} we will see that the continuity equation in \cref{eq:continuity_equation1} without terminal condition $\rho_1=\mu$ and for a well chosen vector field  $V_t$ leads to a gradient flow in probability space.
\subsection{First variation of a functional}
Here we introduce the notion of first variation of a functional $\mathcal{F}$ which will be crucial to define the Wasserstein gradient flow in \cref{sec:gradient_flow}. We then provide explicit expressions of this first variation in the case of the MMD and  sinkhorn divergence.

Consider a real valued functional $\F$ defined over $\mathcal{P}_2(\X)$. We call $\frac{\partial{\F}}{\partial{\nu}}$ if it exists, the unique (up to additive constants) function such that $\frac{d}{d\epsilon}\F(\nu+\epsilon  (\nu'-\nu))\vert_{\epsilon=0}=\int\frac{\partial{\F}}{\partial{\nu}}(\nu) (\diff \nu'-\diff \nu) $ for any $\nu' \in \mathcal{P}_2(\X)$. For a fixed $\nu$, the function $\frac{\partial{\F}}{\partial{\nu}}(\nu)$ is a real valued function defined on $\X$ and is called the first variation of $\F$ evaluated at $\nu$. 

In the case of the squared MMD, a simple expression is obtained by direct calculation:
\begin{align}
	\frac{\partial \text{MMD}^2(\nu,\mu)}{\partial \nu}(\nu)(x) = 2\left(\int k(x,y)d\nu(y) -  \int k(x,y)d\mu(y)\right).
\end{align}
This can be easily estimated using samples from both $\mu$ and $\nu$.

The first variation of the Sinkhorn is more involved. We first recall the expression of the Sinkhorn distance $d_{c,\alpha}(\mu,\nu)$ in terms of the optimal potential functions:
\begin{align}
    d_{c,\alpha}(\mu,\nu) = \int f(x)d\mu(x) + \int g(x)d\nu(x)
\end{align}
where $f$ and $g$ are unique up to an additive constant \cite[Proposition 1]{feydy2019}. In practice, given samples $(X_i)_{1\leq i \leq N}$ and $(Y_i)_{1\leq i\leq N}$ from  $\nu$ and $\mu$, $f$ and $g$ can be estimated on those values using the iterative sinkhorn algorithm, this provides vectors  $f_i$ and $g_i$ such that $f_i\sim f(X_i) $ and $g_i \sim g(Y_i) $.

The first variation of the Sinkhorn distance  is simply given by  differentiating wrt $\nu$:
\begin{align}
   \frac{ \partial d_{c,\epsilon}(\mu,\nu)}{\partial \nu}(\nu)(x) = g(x)
\end{align}
However, $g$ needs to be evaluated at arbitrary points $x$, while the Sinkhorn algorithm only provides the values $g_i$ and $f_i$ at the sample points $X_i$ and $Y_i$. This is not an issue as noted in \cite{genevay2018sample,feydy2019}. Indeed, $f$ and $g$ are related by the equation:
\begin{align}
    g(x) = -\epsilon \log(\int  \exp( \frac{f(y) - c(x,y)}{\epsilon}  )d\nu(y) )
\end{align}
Hence, $g$ can be estimated by replacing the expectation by the empirical one and using the estimated values $f_i$ at the sample points $Y_i$  :
\begin{align}
    \hat{g}(x) = -\epsilon \log(\frac{1}{N}  \exp( \frac{f_i - c(x,Y_i)}{\epsilon}  ) ).
\end{align}
Finally, the variation of the Sinkhorn divergence, is obtained by summing those of each of it's components:
\begin{align}
     \frac{ \partial \mathcal{S}_{c,\epsilon}(\mu,\nu)}{\partial \nu}(\nu)(x) = 2*\frac{ \partial d_{c,\epsilon}(\mu,\nu)}{\partial \nu}(\nu)(x) - \frac{ \partial d_{c,\epsilon}(\nu,\nu)}{\partial \nu}(\nu)(x).
\end{align}
\subsection{Wasserstein gradient flow}\label{sec:gradient_flow}

The formal gradient flow equation associated to a functional $\F$ can be written (see \cite{carrillo2006contractions}, Lemma 8 to 10):
\begin{equation}\label{eq:continuity_equation1}
\frac{\partial \nu_t}{\partial t}= \mathrm{div}( \nu_t \nabla \frac{\partial \F}{\partial \nu_t})
\end{equation}
where $\mathrm{div}$ is the divergence operator and $\nabla \frac{\partial \F}{\partial \nu}(x)$ is the Riemannian gradient of $\frac{\partial}{\partial \nu} \F(x)$ which is an element of the tangent space of $\X$ at point $x$. 
It can be shown that the probability distributions $\nu_t$ decrease $\F$ in time. More precisely the following energy dissipation equation holds under mild regularity conditions on $\F$:
\begin{align}
	\F(\nu_t) = - \int \Vert  \nabla \frac{\partial \F}{\partial \nu}(\nu_t)(x)\Vert^2 d\nu_t(x).   
\end{align}
$\F(\nu_t)$ is a decreasing function in time,  hence the interpretation of $\nu_t$ as a gradient flow of $\F$.

\subsection{Riemannian Particle Descent}\label{sec:particle_descent}
 The equation in \ref{eq:continuity_equation1} admits an equivalent expression in terms of particles which will be useful in practice:
\begin{align}
	\frac{X_t}{dt} =   -\nabla \frac{\partial \F}{\partial \nu}(\nu_t)(X_t)
\end{align}
A discretization in time and space can be performed in the following way: Given $N$ initial particles $(X_0^{i})_{1\leq i\leq N}$, and a step-size $\gamma$, the following update rule can be used:
\begin{align}
	X_{t+1}^{i} = \exp_{X_t^{i}}(- \gamma \nabla \frac{\partial \F}{\partial \nu}(\hat{\nu}_t)(X_t^{i})  )
\end{align}
where $\hat{\nu}_t$ is the particle measure at time $t$:  $  \hat{\nu}_t = \frac{1}{N} \sum_{i=1}^N  \delta_{X_t^{i}}$ and $\exp$ is the exponential map associated to the manifold $\X$. We refer to \cref{sec:exponential_map} for a closed form expression of the exponential map in the case of unit-quaternions.
We provide a pseudocode for the proposed RPGD algorithm in Algorithm~\ref{algo:flow}. We also release our implementation under: \url{https://synchinvision.github.io/probsync}.

     \begin{algorithm2e}[b]
         \SetInd{0.1ex}{1.5ex}
         \DontPrintSemicolon
         \SetKwInOut{Input}{input}
         \SetKwInOut{Output}{output}
         \Input{Relative measures $\{\meas_{ij}\}_{i,j=1}^n$}
         \Output{Absolute measures $\{\meas_i\}_{i=1}^n$}
         {\color{purple} \small \tcp{Initialize the particles}}
         $\q_i^{(k)} \sim \meas_i$, \hspace{50pt} $i = 1,\dots,n$, $\quad k = 1,\dots,K_n$\\
     {\color{purple} \small \tcp{Iterations}}
     \For{$t = 0,\dots T-1$}
     {
     	{\color{purple} \small \tcp{For all cameras}}
        \For{$i = 1,\dots,n$}
        {
        {\color{purple} \small \tcp{Update the positions of the particles}}
        $\q_i^{(k)} \gets \Exp_{\q_i^{(k)}} \Bigr(- \eta_q(w_i^{(k)})\nabla_{\q_i^{(k)}} \mathcal{L}(\meas) \Bigr)$ \hspace{50pt} $k = 1,\dots,K_n$\\
        {\color{purple} \small \tcp{Update the weights of the particles -- Unconstrained case}}
        $\beta_i^{(k)} \gets \beta_i^{(k)}- \eta_{\beta} \nabla_{\beta_i^{(k)}} \bigl(\mathcal{L}(\meas)+\mathcal{R}(\meas)\Bigr)$ \hspace{20pt} $k = 1,\dots,K_n$  \\
        {\color{purple} \small \tcp{Update the weights of the particles -- Constrained case}}
        $\beta_i^{(k)} \gets \Exp_{\beta_i^{(k)}} \Big(-  \eta_{\beta} \nabla_{\beta_i^{(k)}} \bigl(\mathcal{L}( \meas)  \bigr) \Big)$
        }
     }
         \caption{Riemannian Particle Gradient Descent for Measure Synchronization}
         \label{algo:flow}
     \end{algorithm2e}

\section{Analytic Form of the Gradients}
In this section, we provide the analytical forms of the gradients required by our algorithm. We first recall the expression of the normalized logarithm of a unit quaternion $\x:=(a,\vb)$ which is given by:
\begin{align}
	\frac{\log(\x)}{\Vert \x \Vert } = (0,\frac{\vb}{\Vert \vb \Vert})
\end{align}
we also right $\log_{\x}(\y) := \log(\x^{-1}\y)$.
	A subgradient of the Riemannian distance $d(\cdot)$ is given as follows:
\begin{align}
\label{eq:dgrad}
\nabla_{\x} d(\x \in \QH, \y \in \QH) &= 
\begin{cases}
-\sign(\langle \x,\y\rangle )\frac{\log_{\x}(\y)}{\|\log_{\x}(\y)\|}\equiv \Big(0, -s\frac{\vb}{\|\vb\|} \Big) & \x\neq\y \\
\hfil 0 & \x=\y \\
\end{cases}
\end{align}
where $\vb$ denotes the imaginary part of $\x^{-1}\y$ and $s := \sign(\langle \x,\y \rangle) $ is the sign of dot product between $\x$ and $\y$ with the convention that $s=1$ if the dot product is $0$.

By using this formulation and the chain rule of differentiation, we obtain the  gradient required by RPGD which is given by Proposition \ref{prop:grad_dist}. 
We finally combine this gradient with the gradient of the Sinkhorn divergence or the gradient of MMD by using autodiff.

\begin{prop}
\label{prop:grad_dist}
The gradient of $d(\q_i\q_j^{-1}, \q_{ij}  )$ w.r.t. $\q_i$ and $\q_j$ is given by:
	\begin{align}
		\nabla_{\q_i} d(\hat{\q}_{ij},\q_{ij}))&=   \q_j^{-1} \nabla_{\hat{\q}_{ij}} d(\hat{\q}_{ij},\q_{ij})\q_j\\
		 \nabla_{\q_j} d(\hat{\q}_{ij},\q_{ij})&=   -\nabla_{\q_i} d(\hat{\q}_{ij},\q_{ij})
	\end{align}
where $\hat{\q}_{ij} = \q_i \q_j^{-1}$
\end{prop}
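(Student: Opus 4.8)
The plan is to treat $\QH_1\cong\mathbb{S}^3$ as a compact Lie group equipped with its bi-invariant (round) metric, and to represent every tangent vector in the Lie algebra $\mathfrak{g}$ of purely imaginary quaternions via left translation, exactly as in the definition of $\Exp_{\q}$ and in \eqref{eq:dgrad}, where $\nabla_{\x}d$ appears as a pure quaternion $(0,-s\vb/\|\vb\|)$. Writing $\psi(\cdot)\triangleq d(\cdot,\q_{ij})$ and $g(\q_i,\q_j)\triangleq\q_i\q_j^{-1}=\hat{\q}_{ij}$, the composite we differentiate is $\phi(\q_i,\q_j)=\psi(g(\q_i,\q_j))$. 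The Riemannian chain rule then gives $\nabla_{\q_i}\phi=(d_{\q_i}g)^{\ast}\,\nabla_{\hat{\q}_{ij}}\psi$ and likewise for $\q_j$, where $(\cdot)^{\ast}$ denotes the metric adjoint of a differential. So the whole proof reduces to computing the two partial differentials of $g$ together with their adjoints.

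First I would compute $d_{\q_i}g$. Taking the curve $t\mapsto\q_i\exp(t\xi)$ with $\xi\in\mathfrak{g}$ and differentiating $\q_i\exp(t\xi)\q_j^{-1}$ at $t=0$ yields $\q_i\xi\q_j^{-1}$; rewriting this as $\hat{\q}_{ij}\zeta$ (the left-translated representative of the velocity at $\hat{\q}_{ij}$) forces $\zeta=\q_j\xi\q_j^{-1}=\mathrm{Ad}_{\q_j}(\xi)$. Analogously, using $(\q_j\exp(t\eta))^{-1}=\exp(-t\eta)\q_j^{-1}$ and differentiating $\q_i\exp(-t\eta)\q_j^{-1}$ gives $d_{\q_j}g:\eta\mapsto-\mathrm{Ad}_{\q_j}(\eta)$. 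Thus, in this frame, both partial differentials are, up to sign, the adjoint action $\mathrm{Ad}_{\q_j}$.

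The key structural fact is that conjugation by a unit quaternion acts on $\mathfrak{g}\cong\R^3$ as an element of $SO(3)$, hence $\mathrm{Ad}_{\q_j}$ is orthogonal for the metric used here; equivalently, the round metric is bi-invariant. Consequently its adjoint is its inverse, $\mathrm{Ad}_{\q_j}^{\ast}=\mathrm{Ad}_{\q_j}^{-1}=\mathrm{Ad}_{\q_j^{-1}}$, which is the conjugation $\zeta\mapsto\q_j^{-1}\zeta\,\q_j$. Substituting into the chain rule yields $\nabla_{\q_i}\phi=\q_j^{-1}\big(\nabla_{\hat{\q}_{ij}}\psi\big)\q_j$ and $\nabla_{\q_j}\phi=-\mathrm{Ad}_{\q_j}^{\ast}\big(\nabla_{\hat{\q}_{ij}}\psi\big)=-\nabla_{\q_i}\phi$, which are precisely the two claimed identities once $\nabla_{\hat{\q}_{ij}}\psi$ is identified with $\nabla_{\hat{\q}_{ij}}d(\hat{\q}_{ij},\q_{ij})$ from \eqref{eq:dgrad}.

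The main obstacle I anticipate is bookkeeping rather than a deep step: one must be consistent about whether tangent vectors are expressed in the ambient $\R^4$ or in the left-translated frame, since the conjugation structure $\q_j^{-1}(\cdot)\q_j$ appears exactly because the gradients in \eqref{eq:dgrad} are written in that frame. A secondary point to dispatch is that $d$ is only a subgradient at $\hat{\q}_{ij}=\q_{ij}$ (where \eqref{eq:dgrad} returns $0$); there the argument applies verbatim to the selected subgradient, while away from this locus $d$ is smooth and the chain rule holds without caveat.
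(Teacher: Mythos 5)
Your proof is correct, but it takes a genuinely different route from the paper's. The paper's argument is concrete: it first uses bi-invariance of the distance to rewrite $d(\hat{\q}_{ij},\q_{ij})$ as $d(\q_i,\q_{ij}\q_j)$ (resp. $d(\q_j,\q_{ij}^{-1}\q_i)$), plugs in the explicit subgradient formula \eqref{eq:dgrad} with the second argument held fixed, and then invokes a dedicated lemma showing that the normalized logarithm commutes with conjugation, $\log(\q^{-1}\x\q)/\|\log(\q^{-1}\x\q)\| = \q^{-1}\bigl(\log(\x)/\|\log(\x)\|\bigr)\q$, to recognize the result as $\q_j^{-1}\nabla_{\hat{\q}_{ij}}d\,\q_j$. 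You instead apply the Riemannian chain rule abstractly: you compute the partial differentials of $g(\q_i,\q_j)=\q_i\q_j^{-1}$ in the left-translated frame as $\pm\mathrm{Ad}_{\q_j}$, and use that $\mathrm{Ad}_{\q_j}$ is orthogonal for the bi-invariant metric, so its metric adjoint is $\mathrm{Ad}_{\q_j^{-1}}$. The two proofs rest on the same underlying fact (equivariance of the Lie-group structure under conjugation, i.e.\ bi-invariance), but yours never touches the explicit form of $\nabla d$ and therefore proves the identities for \emph{any} differentiable function of $\hat{\q}_{ij}$, which is arguably the cleaner statement; the paper's version is more elementary and self-contained, verifying the conjugation lemma by direct quaternion multiplication. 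Your handling of the two caveats (the left-translated frame convention, consistent with $\Exp_{\q}(\etab)=\q\exp(\etab)$ and the pure-quaternion form of \eqref{eq:dgrad}, and the non-smooth locus $\hat{\q}_{ij}=\q_{ij}$ where only a subgradient is selected) matches what the paper implicitly assumes.
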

\begin{proof}
	First recall that $d$ is bi-invariant, hence:
	\[
	d(\hat{\q}_{ij}, \q_{ij}) = d(\q_i,\q_{ij}\q_j) 
	\]
Excluding the case when  $\q_i = \q_{ij}\q_j$ (for which the expression is trivial), we have that:
	\begin{align}
		\nabla_{\q_i}d(\hat{\q}_{ij}, \q_{ij}) = \nabla_{\q_i} d(\q_i,\q_{ij}\q_j)= -\sign(\langle \q_i,\q_{ij}\q_j\rangle) \frac{\log_{\q_i}(\q_{ij}\q_j)}{\Vert \log_{\q_i}(\q_{ij}\q_j)\Vert}   
	\end{align}
	It is easy to see that $\langle \q_i,\q_{ij}\q_i \rangle= \langle \hat{\q}_{ij},\q_{ij} \rangle  $ since composition of the two rotations $\q_i$ and $\q_{ij}$ by $\q_j$ preserves the angles. On the other hand, one can observe that $\q_i^{-1}\q_{ij}q_j =  \q_j^{-1} \hat{\q}^{-1}_{ij} \q_{ij} \q_j  $ and apply \cref{eq:rotation_of_log} to get:
	\begin{align}
		\frac{\log_{\q_i}(\q_{ij}\q_j)}{\Vert \log_{\q_i}(\q_{ij}\q_j)\Vert} = \frac{\log(\q_j^{-1} \hat{\q}^{-1}_{ij} \q_{ij} \q_j)}{\Vert \log(\q_j^{-1} \hat{\q}^{-1}_{ij} \q_{ij} \q_j)\Vert} = \q_j^{-1}  \frac{\log_{\hat{\q}_{ij}}(\q_{ij})}{\Vert \log_{\hat{\q}_{ij}}(\q_{ij}) \Vert} \q_j,
	\end{align}  
	This shows the first identity. The second identity is obtained similarly. By bi-invariance of $d$, we have that $d(\hat{\q}_{ij}, \q_{ij}) = d(\q_j,\q_{ij}^{-1}\q_i)$, hence:
	\begin{align}
		\nabla_{\q_j}d(\hat{\q}_{ij}, \q_{ij}) = \nabla_{\q_j} d(\q_j,\q_{ij}^{-1}\q_i)= -\sign(\langle \q_j,\q_{ij}^{-1}\q_i\rangle) \frac{\log_{\q_j}(\q_{ij}^{-1}\q_i)}{\Vert \log_{\q_j}(\q_{ij}^{-1}\q_i)\Vert}.
	\end{align}
	Moreover, we have that $\q_j^{-1}\q_{ij}^{-1}\q_i = (\q_i^{-1}\q_{ij}\q_j)^{-1}$, thus using that $\log(\x^{-1})= -\log(\x)$ and that $\langle \q_j,\q_{ij}^{-1}\q_i \rangle = \langle \q_i,\q_{ij}\q_j \rangle$ it follows :
	\begin{align}
		\nabla_{\q_j}d(\hat{\q}_{ij}, \q_{ij}) = \sign(\langle \q_i,\q_{ij}\q_j\rangle) \frac{\log_{\q_i}(\q_{ij}\q_j)}{\Vert \log_{\q_i}(\q_{ij}\q_j)\Vert}.
	\end{align}
	which concludes the proof.
	\end{proof}

\begin{lemma}\label{eq:rotation_of_log}
	Let $x$ and $q$ be unit quaternions, then the following holds:
	\begin{align}
		\frac{\log(\q^{-1}\x\q)}{\Vert \log(\q^{-1}\x\q) \Vert} &= \q^{-1}\frac{\log(\x)}{\Vert \log(\x)\Vert}\q\\
	\end{align}
\end{lemma}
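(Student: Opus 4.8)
The plan is to exploit the well-known fact that conjugation by a unit quaternion is a norm-preserving $\R$-algebra automorphism of $\QH$ that fixes the scalar part and acts as an $SO(3)$ rotation on the imaginary part. Throughout I would assume $\x \neq \pm\textbf{1}$, so that $\log(\x) \neq 0$ and the normalizations on both sides are well defined; this is precisely the regime in which the lemma is invoked inside the proof of Proposition~\ref{prop:grad_dist}, where the degenerate case is handled separately.

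First I would write $\x$ in axis--angle form, $\x = (\cos\theta,\ \hat{\vb}\sin\theta)$, with $\hat{\vb}$ a unit imaginary vector and $\theta \in (0,\pi)$, so that by the log formula of the excerpt $\log(\x) = (0,\ \theta\hat{\vb})$ and hence $\log(\x)/\Vert\log(\x)\Vert = (0,\hat{\vb})$. Next I would record two elementary properties of the conjugation map $\phi_{\q}:\y\mapsto \q^{-1}\y\q$: (i) it fixes the scalar part, since $\mathrm{Re}(\q^{-1}\y\q)=\mathrm{Re}(\y)$ for every $\y$; and (ii) it sends a pure quaternion $(0,\vb)$ to $(0,R\vb)$, where $R\in SO(3)$ is the rotation associated with $\q$, so in particular $\phi_{\q}$ is an isometry. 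Applying (i) and (ii) to the decomposition $\x = (\cos\theta,0) + (0,\hat{\vb}\sin\theta)$ yields $\q^{-1}\x\q = (\cos\theta,\ \sin\theta\,R\hat{\vb})$: conjugation leaves the angle $\theta$ unchanged and merely rotates the unit axis, with $R\hat{\vb}$ again a unit vector because $R$ is orthogonal.

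Finally I would apply the log formula to $\q^{-1}\x\q$: since its scalar part is still $\cos\theta$ and its normalized axis is $R\hat{\vb}$, we obtain $\log(\q^{-1}\x\q) = (0,\ \theta\,R\hat{\vb})$ and therefore $\log(\q^{-1}\x\q)/\Vert \log(\q^{-1}\x\q)\Vert = (0,R\hat{\vb})$. On the other hand, $\q^{-1}[\log(\x)/\Vert\log(\x)\Vert]\q = \q^{-1}(0,\hat{\vb})\q = (0,R\hat{\vb})$ by property (ii). The two sides coincide, which proves the claim. Equivalently, and more slickly, one may note that $\phi_{\q}$ is an inner automorphism and hence commutes with the power series defining $\log$, giving $\log(\q^{-1}\x\q)=\q^{-1}\log(\x)\q$ in one stroke; dividing by $\Vert\log(\x)\Vert = \Vert\q^{-1}\log(\x)\q\Vert$ then delivers the identity.

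I expect the only genuine obstacle to be bookkeeping around the degenerate configurations and sign conventions rather than any substantive computation: one must guarantee $\theta \notin\{0,\pi\}$ so that both the axis $\hat{\vb}$ and the two normalizations are well defined, and one must keep the double-cover convention consistent (the scalar part is fixed under conjugation irrespective of the antipodal ambiguity $\q\sim-\q$, since $(-\q)^{-1}\x(-\q)=\q^{-1}\x\q$). The algebraic core --- that conjugation fixes $\mathrm{Re}$ and rotates the imaginary part --- is standard and routine.
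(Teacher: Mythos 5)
Your proof is correct and rests on the same core fact as the paper's own argument: conjugation by a unit quaternion fixes the scalar part and acts on the imaginary part as a norm-preserving rotation, so the normalized logarithm transforms by conjugation. The only difference is presentational --- the paper derives this fact by expanding the quaternion product explicitly (exhibiting the rotated axis $Z=(c^2-\Vert v\Vert^2)w+2\langle v,w\rangle v+2c\,w\wedge v$ and noting both sides are unit vectors in its direction), whereas you invoke the standard $SO(3)$ action of conjugation and add a power-series remark that compresses the same observation.
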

\begin{proof}
	Let's first prove the first equality, we write $\x=(b,w)$ and $\q=(c,v)$ where $b$ and $c$ are the real parts of $\x$ and $\q$ while $w$ and $v$ are their complex part.
	by definition of the quaternion product, we have that:
	\begin{align}
		\q^{-1}\x\q = (b,  (c^2- \Vert v\Vert^2 )w + 2\langle  v,w\rangle v  +2 c w\wedge v   )
	\end{align}
	Let's call $Z = (c^2- \Vert v\Vert^2 )w + 2\langle  v,w\rangle v  +2cw\wedge v $ to simplify notations. Hence, we have by definition of the logarithm:
	\begin{align}
		\frac{\log(\q^{-1}\x\q)}{\Vert \log(\q^{-1}\x \q) \Vert} = (0, \frac{Z}{\Vert Z \Vert} )
	\end{align}
	On the other hand, we also have that $\frac{\log(\x)}{\Vert\log(\x) \Vert } =  (0,\frac{w}{\Vert w\Vert})$, hence:
	\begin{align*}
		 \q^{-1}\frac{\log(\x)}{\Vert \log(\x)\Vert}\q = (0, \frac{1}{\Vert w \Vert} \left(c^2- \Vert v\Vert^2 )w + 2\langle  v,w\rangle v  +2cw\wedge v \right) := (0,\frac{Z}{\Vert w\Vert})
	\end{align*}
	We have shown that $\q^{-1}\frac{\log(\x)}{\Vert \log(\x)\Vert}\q$ and $\frac{\log(\q^{-1}\x\q)}{\Vert \log( \q^{-1}\x \q )\Vert}$ have the same direction, since both are unit vectors, they must be equal.  
\end{proof}
\subsection{Exponential map in quaternion space}\label{sec:exponential_map}
We provide a closed form expression for the exponential map used in the update rule:
let $q$ be an element in the unit quaternion manifold, i.e: $\Vert q \Vert =1$ , and $v$ an element of it's tangent space which is necessarily of the form $v=  (0,w)$ where $w$ is a vector in $\mathbb{R}^3$. Indeed, a vanishing first component insures that the $v$ doesn't contain components that are orthogonal to the unit quaternion manifold. 
\begin{align}
	exp_{q}(v) = q \exp(v) := q( cos(\Vert w \Vert), sin(\Vert w \Vert ) \frac{w}{\Vert w\Vert}   )
\end{align}

\section{The Composition Function}
\begin{itemize}
\item High entropy:
In this case $g_{ij}$ is given by:
\begin{align}
	g_{ij}(\meas) = \sum_{k_i=1}^{K_i} \sum_{k_j=1}^{K_j} w_i^{(k_i)}w_j^{(k_j)}\delta_{q_i^{(k_i)}\overline{q_j^{(k_j)}}} 
\end{align}
\item Low entropy
In this case, $K_i =K_j=K$ and the weights satisfy the additional constraint: $w_i^{(k)}=w_j^{(k)}=w^{(k)}$ for some non-negative numbers $(w^{(k)})_{1\leq k\leq K}$ that sum to $1$. Moreover, we have that:
\begin{align}
\label{eqn:lowent}
	g_{ij}(\meas) = \sum_{k=1}^{K}  w^{(k)}\delta_{q_i^{(k)}\overline{q_j^{(k)}}} 
\end{align}
\end{itemize}

\section{More Details about the Theoretical Result}
We start by detailing the assumption \textbf{H}3. In particular, we will give the precise definition of a \emph{non-degenerate} minimum. We denote by $\q$ a vector of $n$ quaternions $(\q_1,...,\q_n)$ and by $(\q^*)^{(k)}$ the $k$-th particle from the optimal distribution $\meas^*$. We will introduce the same objects as in \cite[section 3.1]{chizat2019sparse}. Note that in all our setting we fix the ratio of the learning rates $\alpha:= \frac{\eta_{\beta}}{\eta_{q}}$ to $0.1$.
Let us first define $\mathcal{J}(\meas) = \mathcal{L}(\meas) +\mathcal{R}(\meas) $ and denote by $\mathcal{J}'_{\meas^\star}$ the differential of $\mathcal{J}$ at $\meas^\star$ \cite{chizat2019sparse}.The \emph{local kernel matrix} $H$, is defined as a matrix in $\mathbb{R}^{(K*\times(1+4n))^2}$ as follows:
\begin{align}
H_{(k,l:l+4),\left(k', l':l'+4\right)}=\Biggl\{
\begin{array}{ll}{ \nabla_{\q_l, \q_{l^{\prime}}}^{2} \mathcal{J}_{\meas^{\star}}^{\prime}\left((\q^{\star})^{(k)}\right)} & {\text { if } k=k^{\prime} \text { and } l, l^{\prime} \geq 1} \\ 
{0} & {\text { if } l=0 \text { or } l^{\prime}=0}
\end{array}
 ,
\end{align}
where $\nabla^2{\q_l,\q_{l'}}$ denotes the Hessian matrix that is composed of the partial derivatives $\partial_{\q_l}\partial_{\q_{l'}}$. 
We also introduce the features $\Phi((\q_1,...,\q_n))$ and $\Psi$ defined by:
\begin{align*}
	\Phi((\q))_{ij}  &= y\mapsto k(g_{ij}(\q),  y )\\
	\Psi_{ij} &= y\mapsto \int k(y',y)\diff \meas_{ij}(y')
\end{align*}
for all $(i,j)\in\Edges$. Hence, the loss function $\mathcal{L}(\meas)$ can be re-expressed as:
\begin{align}
	\mathcal{\mathcal{L}(\meas)} = \mathcal{N}(\int \Phi(\q)\diff\meas(\q)) := \sum_{(i,j)\in\Edges} \Vert \int \phi_{ij}(\q) \diff\meas(\q) - \Psi_{ij} \Vert^{2}_{\mathcal{H}}.
\end{align}
Finally, we define the \textit{global kernel} $K$ given by:
\begin{align}
	K_{(k,l),(k',l')} = \langle \beta_{k} \bar{\nabla}\Phi((\q^{\star})^{(k)}), \beta_{k'} \bar{\nabla}\Phi((\q^{\star})^{(k')})  \rangle_{d^{2}R_{f^{\star}}}
\end{align}
where $\beta_k$ are such that the optimal weights $w_k^{\star}$ satisfy: $w_k^{\star}= \beta_k^2$, the extended gradient $\bar{\nabla}$ is defined to be $\bar{\nabla}\Phi :=  (2\alpha\Phi,\nabla \Phi )$ and the inner product is taken w.r.t. hessian of $\mathcal{N}$ at $f^{\star} = \int \Phi(\q)\diff\meas^{\star}(\q)$ as in \cite{chizat2019sparse}. Note that in general $K$ is positive semi-definite, however, as we will see now, \textbf{H}3 requires it to be definite. Now, we precise the definition of \textbf{H}3 as follows:
\begin{assumption}
The following conditions hold.
\begin{itemize}
\item The matrix $K$ is positive definite.
\item The smallest singular value of $H$ is strictly larger than $0$.
\item The only points where $\mathcal{J}' $ vanishes are the optimal particles $(\q^{\star})^{(j)}$. 
\end{itemize}
\end{assumption}

Accordingly, we precise the statement of the theorem given in the main paper.
\begin{thm}
\label{thm:global2}
Consider the LE setting \eqref{eqn:lowent} and Case 2 defined in the main paper (i.e.\ unconstrained case in~\cref{algo:flow}). Assume that \textbf{H}1-4 hold. Then, for any $0 < \varepsilon \leq 1/2$, there exists $C> 0$ and $\rho \in(0,1)$, such that  the following inequality holds:
\begin{align}
 \mathcal{J}(\meas^{(\kappa)})-\mathcal{J}(\meas^{\star}) & \leq\left(\mathcal{J}(\meas^{(0)})-\mathcal{J}(\meas^{\star})\right)\left(1-\rho\right)^{\kappa-\kappa_{0}}
\end{align}
where $\kappa = 0,1 , \dots$ denotes the iterations is a constant, and $\kappa_{0}=C /\left(\mathcal{J}(\meas^{(0)})-\mathcal{J}(\meas^{\star})\right)^{2+\epsilon}$.
\end{thm}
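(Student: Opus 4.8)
The plan is to recast the low-entropy, MMD-regularized measure synchronization problem as an instance of the sparse optimization framework of \cite{chizat2019sparse} and then invoke its Theorem 3.9. First I would observe that in the LE setting the joint measure collapses onto the diagonal, so that $\meas = \sum_{k=1}^{K} w^{(k)} \delta_{(\q_1^{(k)},\dots,\q_n^{(k)})}$ is a (sub-)probability measure on the product manifold $M = (\mathbb{S}^3)^n$, and each composition pushes forward diagonally as in \eqref{eqn:lowent}. Using that the squared MMD is a squared norm in the RKHS $\mathcal{H}$ associated with $k$, the loss rewrites as $\mathcal{L}(\meas) = \mathcal{N}\bigl(\int \Phi(\q)\diff\meas(\q)\bigr)$ with $\Phi = (\phi_{ij})_{(i,j)\in\Edges}$, $\phi_{ij}(\q) = k(g_{ij}(\q),\cdot)$, and $\mathcal{N}$ the quadratic (hence convex and smooth) functional $f \mapsto \sum_{(i,j)} \Vert f_{ij} - \Psi_{ij}\Vert^2_{\mathcal{H}}$. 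Combined with the total-mass regularizer $\mathcal{R}(\meas) = \lambda n \int \diff\meas$, the objective $\mathcal{J} = \mathcal{L} + \mathcal{R}$ is exactly of the form treated in \cite{chizat2019sparse}: a smooth convex loss of a linear kernel embedding plus a total-variation penalty. The convexity of $\mathcal{N}$ is the essential ingredient that makes global convergence possible despite the nonconvexity in the particle positions.

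Second, I would match the two update rules of RPGD to the over-parametrized conic gradient descent of \cite{chizat2019sparse}. The square parametrization $w^{(k)} = (\beta^{(k)})^2$ of \textbf{H}4 is precisely the $2$-homogeneous lift used there, and the adaptive step-size $\eta_q^0 / w_i^{(k)}$ for the positions together with the fixed ratio $\alpha = \eta_\beta/\eta_q$ reproduces the position–weight descent of Chizat, with the Euclidean step replaced by the Riemannian $\Exp$ on the quaternion factors. Assumption \textbf{H}1 then ensures that $\mathcal{N}$ composed with the embedding is twice differentiable with locally Lipschitz second derivatives: the composition functions $g_{ij}$ are smooth (quaternion multiplication and conjugation), so each $\phi_{ij}$ inherits the regularity of $k$.

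Third, I would translate the remaining hypotheses into Chizat's nondegeneracy conditions. Assumption \textbf{H}2 supplies over-parametrization and a unique, sparse minimizer $\meas^\star$ with $K^\star < K$, matching Chizat's sparsity requirement, while \textbf{H}3 — as detailed via the local kernel matrix $H$ and the global kernel $K$ — is exactly Chizat's Assumption 5 (positive definiteness of $K$, strictly positive smallest singular value of $H$, and $\mathcal{J}'$ vanishing only at the optimal particles). With all of Chizat's hypotheses verified, Theorem 3.9 of \cite{chizat2019sparse} yields the claimed linear rate after a burn-in phase of length $\kappa_0 = C/(\mathcal{J}(\meas^{(0)})-\mathcal{J}(\meas^\star))^{2+\varepsilon}$.

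The hard part will be the Riemannian adaptation: Chizat's analysis is stated for measures on a Euclidean (or toric) domain, so I would need to check that the arguments carry over when the particle positions live on the compact manifold $(\mathbb{S}^3)^n$ and the updates use the exponential map — in particular that the first variation $\partial\mathcal{J}/\partial\meas$ and its Riemannian gradient behave as in the flat case, which is plausible since the relevant quantities are intrinsic and the manifold is compact and smooth. A second subtlety is that the uniqueness in \textbf{H}2 genuinely fails because of the antipodal identification $\q \sim -\q$; I would resolve this, as noted after the theorem, by restricting each quaternion to have nonnegative scalar part, so that the minimizer becomes unique on the restricted chart and \textbf{H}3 can hold.
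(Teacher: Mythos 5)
Your proposal is correct and follows essentially the same route as the paper: the paper's proof likewise consists of rewriting the loss as $\mathcal{N}\bigl(\int \Phi(\q)\diff\meas(\q)\bigr)$ with the features $\Phi$, $\Psi$ and the kernels $H$, $K$ defined in the supplementary, identifying \textbf{H}1--4 with the hypotheses of \cite{chizat2019sparse} (with \textbf{H}3 being exactly Assumption 5 there), and invoking Theorem 3.9 of that work, while handling the antipodal ambiguity by restricting to quaternions with nonnegative scalar part. The Riemannian subtlety you flag is real but is left implicit in the paper as well.
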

\section{Additional Evaluations}
\paragraph{Evaluations on the real 1D-SFM dataset~\cite{Wilson2014}}
We now evaluate two versions of our algorithm on the common benchmark introduced in 1D-SFM~\cite{Wilson2014}. In particular, we compare our results with Chatterjee and Govindu~\cite{chatterjee2017robust} as well as the Weiszfeld rotation averaging~\cite{hartley2011l1}. Our results summarized in~\cref{tab:1dsfm} demonstrate that the quality of our single particle version (\emph{Ours - K=1}) matches well with the state of the art. We also use multiple particles ($K=10$) to model the pose distributions even though we have single observed particle corresponding to the relative rotation. This is in essence similar to $K$-best synchronization~\cite{sun2019} and such approach can explain the uncertainty of the estimates as empirical distributions. The results are shown in the column \emph{Ours - K=10}. We pick the best rotation as the particle that has the maximum weight. To achieve this we use the version where we also optimize for the weights. It is seen that such a $K$-best scheme can have more chances to find the correct mode and even if we are not using explicit M-estimators yields reduced errors that are almost on par with the most-robust methods like Chatterjee \& Govindu~\cite{chatterjee2017robust}. In this evaluation we minimize the $p$-norm with $p=1.1$. In fact this finding also aligns well with our theory, where optimizing for weights allows us to find a better minimum:
A large number of particles initialized randomly ensures the coverage of all basins of attraction of the loss, while optimizing the weights allows to `kill' particles in bad local minima, in favor of those near global optima. The classical problem where $k=1$ only allows one particle which can fall in a bad local minimum and can neither escape nor be killed. This is also the reason why our theorem is not applicable to the classical synchronization.
It is noteworthy that in this evaluation we omitted the large scenes as our algorithm is computationally more costly than the algorithms specifically designed to solve the single-particle synchronization problem.
{
\begin{table}[t!]
  \centering
  \small
  \caption{Median angular errors on 1D-SFM dataset~\cite{Wilson2014}.}
    \begin{tabular}{lcccc}
    1DSFM - Scene & Chatterjee \& Govindu~\cite{chatterjee2017robust}& Weiszfeld~\cite{hartley2011l1} & Ours - K=1 & Ours - K=10 \\
    \midrule
    Alamo & 2.14  & 3.57  & 1.66  & 1.43 \\
    Ellis Island & 1.15  & 1.66  & 0.86  & 0.86 \\
    Madrid Metropolis & 3.08  & 4.37  & 4.01  & 3.50 \\
    Montreal Notre Dame & 0.71  & 0.92  & 0.92  & 0.96 \\
    NYC Library & 1.40  & 2.43  & 2.12  & 2.12 \\
    Piazza del Popolo & 2.62  & 3.35  & 2.98  & 1.26 \\
    Roman Forum & 1.70  & 2.11  & 3.61  & 5.21 \\
    Tower of London & 2.45  & 2.73  & 2.63  & 2.69 \\
    Vienna Cathedral & 4.64  & 5.14  & 1.89  & 1.70 \\
    Yorkminister & 1.62  & 2.73  & 1.89  & 1.89 \\
    \midrule
    Average & 2.15  & 2.90  & 2.26  & 2.16 \\
    \end{tabular}%
  \label{tab:1dsfm}%
\end{table}%
}

\paragraph{Measure Synchronization on the Mug Sequence}
We have now evaluated our algorithm on the \textit{mug} object shown in our main paper. To do that, we render the depth image of the 3D CAD model of the mug from various viewpoints. For each viewpoint, we back-project the depth image, creating a partial 3D view. When the handle is invisible, the partial view corresponds to a simple cylinder that is hard to match uniquely. We also store the object rotations for each view. The pose of the first image is set to identity. We then impose the graph structure by connecting each view to the $5$-nearest. For each edge, we run the voting based point pair feature matching of~\cite{drost2010model,birdal2015point}. The output of this algorithm are $K$ poses ranked by the voting score. We set $K=12$ so that the pairwise marginals contain $12$ particles. Such a procedure yields a set of diverse poses for pairs where multiple alignments are possible, and distributions with single peaks when there exist exact alignments dominating the voting table. We visualize this in~\cref{fig:suppexp}. We then run our algorithm on the obtained distributions and record the median and minimum angular errors measured by the geodesic distance. Our algorithm can match the ground truth pose by an error of $4^\circ$ while the median error of all particles is around $19^\circ$. The latter occurs as we are trying to match the entire distribution rather than a single best. Note that the ability to characterize the entirety of the possibilities is unique to our approach and as shown in this example, is of practical value. For the details of pairwise pose estimation we refer the reader to~\cite{drost2010model,birdal2015point}. Any other pairwise registration algorithm could have been used to obtain the relative rotations provided that multiple potentially uncorrelated solutions can be obtained. In this regard, ICP~\cite{besl1992method}-like algorithms such as FGR~\cite{zhou2016fast} or algorithms that strictly seek a single pose such as~\cite{mellado2014super} are discouraged.

\insertimageC{1}{mug_cropped.pdf}{Mug dataset. Each view is back-projected to 3D space creating a partial point cloud. We then estimate multiple possible poses for each pair of points within the vicinity by using a voting based algorithm. This figure shows that such distributions are peaked when objects can be registered uniquely and dispersed when multiple solutions do exist.
}{fig:suppexp}{h!}

\end{document}